\newtheorem{theorem}{Theorem}
\newtheorem{lemma}{Lemma}
\newtheorem{proposition}{Proposition}
\title{{\fontsize{20pt}{24pt}\selectfont Topological Invariance and Breakdown in Learning}}
\author{
\textbf{Yongyi Yang}$^{1,3}$, \textbf{Tomaso Poggio}$^2$, 
\textbf{Isaac Chuang}$^2$, \textbf{Liu Ziyin}$^{2,3,}$\thanks{Corresponding to: \url{ziyinl@mit.edu}} \\
\begin{flushleft}
\hspace{5.5em}$^1$ University of Michigan \\
\hspace{5.5em}$^2$ Massachusetts Institute of Technology \\
\hspace{5.5em}$^3$ NTT Research
\end{flushleft}}
\date{}
\begin{document}

\maketitle

\begin{abstract}
\noindent
We prove that for a broad class of permutation-equivariant learning rules (including SGD, Adam, and others), the training process induces a bi-Lipschitz mapping between neurons and strongly constrains the topology of the neuron distribution during training. This result reveals a qualitative difference between small and large learning rates $\eta$. With a learning rate below a topological critical point $\eta^*$, the training is constrained to preserve all topological structure of the neurons. In contrast, above $\eta^*$, the learning process allows for topological simplification, making the neuron manifold progressively coarser and thereby reducing the model's expressivity. Viewed in combination with the recent discovery of the edge of stability phenomenon, the learning dynamics of neuron networks under gradient descent can be divided into two phases: first they undergo smooth optimization under topological constraints, and then enter a second phase where they learn through drastic topological simplifications. A key feature of our theory is that it is independent of specific architectures or loss functions, enabling the universal application of topological methods to the study of deep learning. 
\end{abstract}

\vspace{-1em}
\section{Introduction}
\vspace{-0.3em}

Deep learning has emerged as an extraordinarily powerful tool, yet due to its complexity and inherent nonlinearity, our understanding of its inner mechanisms remains limited. Among the many perspectives for understanding machine learning, learning dynamics stands out as particularly important. There is a strong practical motivation for studying learning dynamics: a unified understanding of learning dynamics could inform the design of new regularization techniques, learning-rate schedule algorithms and other training strategies, thereby reducing the reliance on extensive hyperparameter tuning and facilitating the development of more efficient models \citep{importance-of-initialization,warmupetc,variance-of-learningrate,warmup-2}. More recently, numerous empirical works have described the universal aspects of learning dynamics \citep{empirical-learning-dynamics-1,cohen2021gradient,empirical-learning-dynamics-3}, yet a unified theoretical framework is still lacking.

A primary difficulty in analyzing the learning dynamics of neural networks lies in their extremely high dimensionality across diverse architectural details. Modern neural networks, such as GPT-4, have more than $10^{12}$ parameters, inducing such complicated dynamics that conventional tools and theories of dynamical systems struggle to apply. Lessons from natural science and many fields of mathematics suggest two primary approaches \citep{noether1918invariante, mumford1994geometric}: (1) study what the high-dimensional object is invariant to, and (2) decompose it into simpler parts. The first approach directly reduces the dimensionality of a problem, while the second allows us to view it as a composition of low-dimensional objects. Our theory, presented in this paper, aims to offer a crucial link between the two perspectives and show that due to a universal property, the permutation invariance (or equivariance) of the model (or learning algorithm), almost any neural network can be naturally decomposed into a system of interacting ``neurons" with much smaller dimensions. 

\newpage

Specifically, under standard regularity conditions, we show that:
\begin{enumerate}[itemsep=0pt,topsep=0pt, parsep=0pt,partopsep=0pt,leftmargin=20pt]
    \item The permutation equivariance of common learning algorithms imposes strong topological constraints on the learning dynamics;\footnote{The word ``topology" is sometimes used to refer to the model architecture. In our work, it always means the mathematical topology of sets.}
    \item With a small learning rate $\eta$, the learning algorithm induces a bi-Lipschitz mapping between neurons at different time steps, thereby preserving the topological structure of the set formed by the neurons. 
    \item With a large $\eta$, this topological invariance breaks down: the learning algorithm descends to a continuous surjection, thereby inducing a simplification process during training. 
\end{enumerate}
The core contribution of this paper can be summarized as the theoretical establishment of a critical point of the topological phase transition (hereinafter referred to as the \textbf{topological critical point}) for learning processes. 

\Cref{fig:illustration} illustrates the theory.

A key feature of our theory is that it relies only on several widely satisfied properties of the learning algorithm and is therefore universal across architectures and optimizers. The system-independence of our result allows us to establish a key conceptual principle: \textit{any permutation-equivariant dynamics induces a topology between its components, and this topology is preserved at a small step size and reduced at a large step size}. This universality lends it good potential to serve as a foundation for future theories. Moreover, our framework is firmly grounded in mathematical topology and can be further developed using tools thereof \citep{milnor1997topology}. The close connection between topology and theoretical physics also opens the door for relevant concepts from physics to be applied here \citep{qi2011topological}.

This paper is organized as follows. \Cref{sec:background} reviews the background. \Cref{sec:preliminaries} introduces the problems setting. \Cref{sec:dynamics-of-parameters} presents our main theory. \Cref{sec:examples} applies the theory to common training algorithms. \Cref{sec:experiments} presents empirical results. Finally, \Cref{sec:discussion} discusses further implications of our theory. All proofs of the theoretical results are deferred to the appendix.

\begin{figure}[t]
    \centering
    \includegraphics[width=0.95\linewidth]{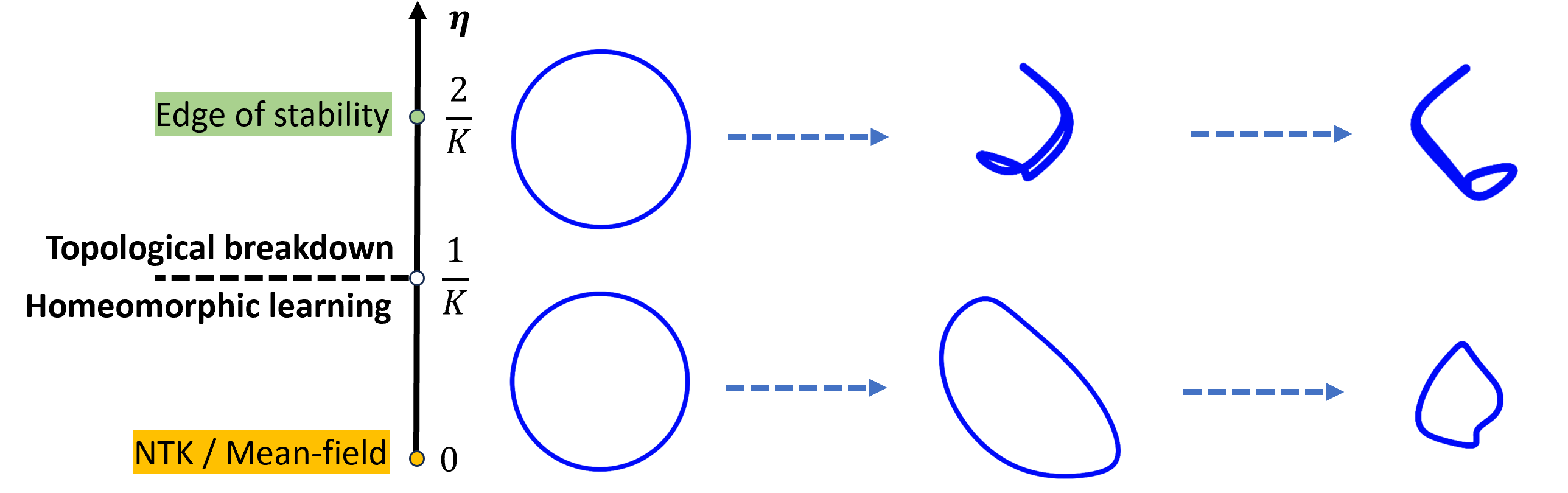}
    \caption{ At a small learning, common learning algorithms induce a homeomorphic transformation of the neuron distribution (blue shapes in Figure), a mechanism underlying common theories including the NTK / lazy regime \citep{jacot2018neural, chizat2018lazy} and the mean-field / feature-learning regime \citep{yang2020feature}. In contrast, most neural networks in real training scenarios are known to move towards the ``edge of stability,'' where the discrete-time updates are no longer stable at any first-order stationary point. From the perspective of topology, what separates these two regimes is the \textbf{topology invariance} in the first regime, where the learning process is strongly constrained to preserve any topological properties, and the \textbf{topological breakdown} in the second regime, where the learning ceases to preserve topology and acts as a simplifier that merges neurons and makes the model more and more constrained in capacity.}
    \label{fig:illustration}
\end{figure}

\vspace{-0.5em}
\section{Background}\label{sec:background}

In this section, we review the background concepts that are essential to our theory, with a focus on the permutation symmetry of neural networks and the role of learning rates in shaping their training dynamics.

\paragraph{Permutation Symmetry.} Permutation symmetry refers to the invariance of a function's output under permutations of its inputs. This property is pervasive in neural networks and has been widely used to analyze their loss landscapes \citep{lmc,permutation-symmetry, ziyin2024symmetry}. For example, any neural network component (such as a layer) that has the following structure: \begin{align}
f({\boldsymbol{x}}; {\boldsymbol{W}}_1, {\boldsymbol{W}}_2) = {\boldsymbol{W}}_2 \sigma\left({\boldsymbol{W}}_1 {\boldsymbol{x}}\right),\label{eq:symmetry-in-nn}
\end{align}
where ${\boldsymbol{W}}_1, {\boldsymbol{W}}_2$ are learnable parameter matrices, ${\boldsymbol{x}}$ is the input vector and $\sigma$ is a scalar activation function (applied element-wisely), possesses permutation symmetry, as \begin{align}
f({\boldsymbol{x}}; {\boldsymbol{W}}_1, {\boldsymbol{W}}_2) = \left({\boldsymbol{P}} {\boldsymbol{W}}_2^\top\right)^\top \sigma \left( ({\boldsymbol{P}}{\boldsymbol{W}}_1) {\boldsymbol{x}}\right) = f({\boldsymbol{x}};\boldsymbol{P} {\boldsymbol{W}}_1, {\boldsymbol{W}}_2 \boldsymbol{P}^\top)
\end{align}
for any permutation matrix ${\boldsymbol{P}}$. If we pair the $i$-th row of ${\boldsymbol{W}}_2$ with the $i$-th column of ${\boldsymbol{W}}_1$ together as a unit (which together form a ``neuron'' in our theory), then the symmetry can be understood as that, the model remains unchanged under exchanging two neurons $\left({\boldsymbol{w}}_{1,i}, {\boldsymbol{w}}_{2,i}\right) \leftrightarrow \left({\boldsymbol{w}}_{1,j}, {\boldsymbol{w}}_{2,j}\right)$. 

Structures in the form of \eqref{eq:symmetry-in-nn} are quite common in all types of neural networks, including convolutional layers, feed-forward layers, and the dot product in the self-attention layers in transformers. All those components therefore possess permutation symmetry, and fall within the scope of our theory. A key consequence of the permutation symmetry is the \textit{permutation equivariance} of the learning algorithms, such as (stochastic) gradient descent and Adam, which we will take as the starting point of our theory.

\paragraph{Critical Learning Rates.} Recent empirical studies suggest that neural networks exhibit qualitatively different learning dynamics under small versus large learning rates. Large learning rates often lead to simpler models \citep{galanti2022sgd, chen2023stochastic, lop-1}, while such dramatic changes seem to be lacking with the use of small learning rates, where the learning dynamics are often well-approximated by the NTK or the mean-field theories \citep{jacot2018neural, yang2020feature, mei2019mean}. In this work, we establish a topological characterization of these transitions.

\paragraph{Topology.} Topology is the mathematical study of abstract shapes and connectivity, focusing on properties that remain unchanged under continuous deformations such as stretching or bending \citep{topology-textbook}. It provides a way to talk about local and global structures without relying on exact distances; a bijective continuous map with a continuous inverse is called a homeomorphism and preserves the topology of general sets. Manifolds are sets with a local Euclidean structure, and their smooth structure is preserved under smooth invertible maps, called diffeomorphisms \citep{diff-geo-textbook}. In the context of neural networks, the collection of neurons at a given time step can be viewed as a manifold embedded in the Euclidean space, whose topology reflects its connectivity and shape \citep{topology-of-neurons,tda-neurons,topological-obstraction,intrinsic-dim-persistent-homology}. However, prior studies have largely remained empirical and focused on specific networks, whereas our work connects the topology of neurons systematically to the training dynamics.

\section{Preliminaries}\label{sec:preliminaries}

Before diving into our main theory, it is important to define the word ``neuron.'' The equivariance property is actually the most general way to define a ``neuron'' (e.g., see \cite{ziyin2024symmetry}), whatever subset of parameters that are permutationally-equivalent to the learning rule can be called a ``neuron.'' In case of fully connected networks trained with GD, this definition of a ``neuron'' is equivalent to the standard definition (incoming plus outgoing weights of an activation unit).

From now on, we temporarily set aside considerations of specific neural networks and learning algorithms, and instead focus on more general and abstract objects. We will consider a (possibly infinite) collection of high-dimensional particles (corresponding to neurons) and their dynamics (corresponding to learning algorithms). In the following, we use the terms ``particles'' and ``neurons'' interchangeably.

\paragraph{Notations.} Let $I$ be an arbitrary potentially uncountable set, which we often refer to as the index set. 

Throughout this paper, we focus on a collection of $D$-dimensional vectors indexed by $I$. We use $(\mathbb R^D)^I$ to represent the set of all such collections. We use calligraphic uppercase letters to denote collections indexed by $I$ (e.g. $\mathcal X \in \left(\mathbb R^D\right)^I$), bold lowercase letters to denote vectors (e.g. ${\boldsymbol{x}} \in \mathbb R^D$), and unbold lowercase letters to denote scalars or an entry of a vector or matrix (e.g. $x_k \in \mathbb R$ represents the $k$-th entry of ${\boldsymbol{x}}$).  For $i \in I$, and a vector ${\boldsymbol{v}} \in \mathbb R^D$, we use ${\boldsymbol{e}}_i \cdot {\boldsymbol{v}} \in \left(\mathbb R^D\right)^I$ to denote a collection of $D$-dimensional vectors where only the $i$-th element is ${\boldsymbol{v}}$ and other vectors are ${\boldsymbol{0}}$.

Let $\mathsf{FSym}(I)$ be the Finitary Permutation Group on $I$, i.e. the group of all permutation operators on $I$ with a finite support \citep{fsym-book}. For an operator $P \in \mathsf{FSym}(I)$ and $\mathcal X = \left\{{\boldsymbol{x}}_i\right\}_{i \in I}$, we use $P\mathcal X$ to represent $\left\{{\boldsymbol{x}}_{P(i)}\right\}_{i \in I}$.

For $\mathcal X = \left\{{\boldsymbol{x}}_i\right\}_{i \in I} \in \left(\mathbb R^D\right)^I$ and $\mathcal Y = \left\{{\boldsymbol{y}}_i\right\}_{i \in I} \in \left(\mathbb R^D\right)^I$, if $\mathcal X$ and $\mathcal Y$ only differs in finite many terms, then we define $\left\|\mathcal X - \mathcal Y\right\| = \sqrt{ \sum_{i \in I} \left\|{\boldsymbol{x}}_i - {\boldsymbol{y}}_i\right\|^2}$, and $\left\|\mathcal X - \mathcal Y\right\| =+\infty$ if otherwise.

\paragraph{Problem Setting.} Formally, we focus on a evolving collection of $D$-dimensional vectors
\begin{align}
\mathcal X^{(t)} = \left\{{\boldsymbol{x}}_i^{(t)}\right\}_{i \in I} \in \left(\mathbb R^D\right)^I,
\end{align}
where $t \in \mathbb N$ is the time axis. $\mathcal X^{(t)}$ is updated by a generic update rule $U^{(t)}: \left( \mathbb R^D \right)^I \to \left(\mathbb R^D\right)^I$ with step size $\eta > 0$:
\begin{align}
{\boldsymbol{x}}^{(t + 1)}_i = {\boldsymbol{x}}_i^{(t)} + \eta U_i^{(t)}\left(\mathcal X\right),
\end{align}
where $U_i^{(t)}(\mathcal X) = \left( U^{(t)}(\mathcal X)\right)_i$. Here, each element in $\mathcal X^{(t)}$ corresponds to the weights of a neuron of a neural network at training step $t$, and $U^{(t)}$ corresponds to the learning algorithm at time point $t$, which includes regularization terms, and can also depend on other parameters that are not considered (this is also why the update rule is time-dependent, as other parameters can change over time).

Abstractly, we consider update rules $U^{(t)}$ satisfying the following properties.
\begin{itemize}[itemsep=0pt,topsep=0pt, parsep=0pt,partopsep=0pt, leftmargin=13pt]
    \item \textbf{(P1) Equivariance Property}: We say $U^{(t)}$ has equivariance property if for any $t \in \mathbb N$, any $\mathcal X \in \left(\mathbb R^D\right)^I$ and $P \in \mathsf{FSym}(I)$, we have $P U^{(t)}(\mathcal X) = U^{(t)}(P\mathcal X)$. In deep learning, this property is a consequence of running gradient-based algorithms on permutation-symmetric loss functions, as we will show in Section~\ref{sec:examples}. Prior studies of equivariant dynamics exist but have been mostly limited to dynamical systems \citep{field1980equivariant}; very recently, \cite{wang2025universalcompressiontheorylottery} applied it to the study of model and data compression, but its role in topology is unclear.
    \item \textbf{(P2-$K$) $K$-Continuity Property}: For $K > 0$, if for any $t \in \mathbb N$ and any $\mathcal X, \mathcal Y \in \left(\mathbb R^D\right)^I$ , \begin{align}
    \left\| U^{(t)}\left({\mathcal {X}}\right) - U^{(t)}\left(\mathcal Y\right)\right\| \leq K \left\|   \mathcal X - \mathcal Y\right\|,\label{eq:k-continuous}
    \end{align}
    then we say $U^{(t)}$ has $K$-continuity property. 
    Notice that \eqref{eq:k-continuous} makes sense only when $\mathcal X$ and $\mathcal Y$ only differ in finite entries. When $I$ is finite and $U^{(t)}$ is gradient descent, the quantity $K$ is the largest eigenvalue of the Hessian of the loss function, and the $K$-continuity property becomes an upper bound of the Lipschitz continuity of the gradient, which is commonly seen in optimization theory (See details in \Cref{sec:example-gd}). We intentionally choose this form to establish this correspondence; however, it is possible to prove our theory with a weaker version of the continuity property. See \Cref{sec:weaker-continuous} for details.
\end{itemize}

If, beyond topology, we also want to talk about the differentiable manifold structure of the neurons, we further consider a smoothness property of $U$.
\begin{itemize}[itemsep=0pt,topsep=0pt, parsep=0pt,partopsep=0pt, leftmargin=13pt]
    \item \textbf{(P3) Smoothness Property}: For any $i\in I$ and $t \in \mathbb N$, define \begin{align}\label{eq:def-g}
\forall {\boldsymbol{y}},{\boldsymbol{z}} \in \mathbb R^D, g_{i}^{(t)}({\boldsymbol{y}}, {\boldsymbol{z}}) = U_i^{(t)}\left(\mathcal X^{(t)} + \left( {\boldsymbol{e}}_i - {\boldsymbol{e}}_{j}\right) \cdot {\boldsymbol{z}}\right), ~~\text{such that}~~   {\boldsymbol{y}} = {\boldsymbol{x}}^{(t)}_j,
\end{align}
where $j$ is arbitrarily chosen when multiple $j$-s satisfies the condition, and ${\boldsymbol{e}}_j$ is set to ${\boldsymbol{0}}$ if no $j$ satisfies the condition.
If $g_{i}^{(t)}$ is $C^1$ on $\left(\mathbb R^{D}\right)^2$ for any $i\in I$, we say $U^{(t)}$ has the ($C^1$-)smoothness property. Intuitively, the smoothness property requires that the response of each output entry of $U$ with respect to a small perturbation of one entry of its input must be $C^1$.

\end{itemize}

\section{Topological Properties of Learning}\label{sec:dynamics-of-parameters}

In this section, we present our main theoretical results: the characterization of the change of topology and measure structures of $\mathcal X$ under the update rule. Before diving into the main theorems, we first establish two critical lemmas. These lemmas show that, a combination of the equivariance and continuity of the update rule implies that there is an emergent notion of distance between different neurons. Intuitively, permutation equivariance implies that two infinitesimally close neurons need to have identical updates, which implies that the motion that changes their difference must be vanishing. Thus, equivariance ensures that neurons that start close to each other remain close because dynamics that would increase or decrease their distance are suppressed.

\begin{lemma}[Well-definedness]\label{lem:no-splitting}The following statement holds when $U^{(t)}$ satisfies P1. For any $i,j \in I$ such that $i \neq j$, if at time $t$ we have ${\boldsymbol{x}}_i^{(t)} = {\boldsymbol{x}}_j^{(t)}$, then, ${\boldsymbol{x}}_i^{(t + 1)} = {\boldsymbol{x}}_j^{(t + 1)}$.
\end{lemma}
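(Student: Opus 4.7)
The plan is to exploit property P1 by applying equivariance to the specific permutation that swaps $i$ and $j$. Let $P = (i\,j) \in \mathsf{FSym}(I)$ denote the transposition interchanging the two indices (and fixing all others). Since $P$ has support of size at most $2$, it is indeed finitary and thus lies in $\mathsf{FSym}(I)$, so P1 applies.

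First I would check that the hypothesis ${\boldsymbol{x}}_i^{(t)} = {\boldsymbol{x}}_j^{(t)}$ implies $P\mathcal X^{(t)} = \mathcal X^{(t)}$. Using the convention $(P\mathcal X)_k = {\boldsymbol{x}}_{P(k)}$, one has $(P\mathcal X^{(t)})_i = {\boldsymbol{x}}_j^{(t)} = {\boldsymbol{x}}_i^{(t)}$ and $(P\mathcal X^{(t)})_j = {\boldsymbol{x}}_i^{(t)} = {\boldsymbol{x}}_j^{(t)}$, while the remaining entries are left unchanged by $P$. Hence $P\mathcal X^{(t)}$ equals $\mathcal X^{(t)}$ entry-wise.

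Next I would invoke the equivariance property (P1), which gives
\begin{equation*}
P\,U^{(t)}(\mathcal X^{(t)}) \;=\; U^{(t)}(P\mathcal X^{(t)}) \;=\; U^{(t)}(\mathcal X^{(t)}).
\end{equation*}
Reading off the $i$-th component of the leftmost and rightmost collections yields
$U_j^{(t)}(\mathcal X^{(t)}) = U_{P(i)}^{(t)}(\mathcal X^{(t)}) = (P\,U^{(t)}(\mathcal X^{(t)}))_i = U_i^{(t)}(\mathcal X^{(t)})$,
so the two updates coincide. Adding $\eta$ times this common value to the equal base points ${\boldsymbol{x}}_i^{(t)} = {\boldsymbol{x}}_j^{(t)}$ gives ${\boldsymbol{x}}_i^{(t+1)} = {\boldsymbol{x}}_j^{(t+1)}$, which is the claim.

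Honestly, I do not expect any real obstacle here: the lemma is essentially the statement that an equivariant map preserves the fixed-point set of a group element, specialized to a single transposition. The only place one must be careful is the notational bookkeeping in verifying $P\mathcal X^{(t)} = \mathcal X^{(t)}$ under the convention $(P\mathcal X)_k = {\boldsymbol{x}}_{P(k)}$, since getting the direction of the permutation backwards would still yield the correct conclusion here but could mislead later proofs. Note that property P2-$K$ (continuity) is not needed at all for this lemma; only P1 is used, which matches the hypothesis of the statement.
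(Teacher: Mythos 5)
Your proof is correct and follows essentially the same route as the paper's: both pick the transposition $P = (i\,j) \in \mathsf{FSym}(I)$, observe that the hypothesis gives $P\mathcal X^{(t)} = \mathcal X^{(t)}$, and then apply P1 to conclude $U_i^{(t)}(\mathcal X^{(t)}) = U_j^{(t)}(\mathcal X^{(t)})$, hence equality of the updated points. The paper does the index bookkeeping in a single chain of equalities rather than first deriving $PU^{(t)}(\mathcal X^{(t)}) = U^{(t)}(\mathcal X^{(t)})$, but that is a presentational difference, not a mathematical one.
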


Next, we strengthen the intuition behind \Cref{lem:no-splitting} by incorporating the continuity property. 
\begin{lemma}[No Merging or Splitting]\label{lem:no-merging}
If $U^{(t)}$ satisfies P1 and P2-$K$, then for any $i,j \in I$ such that $i \neq j$, \begin{align}
(1 - \eta K) \left\|{\boldsymbol{x}}_i^{(t)} - {\boldsymbol{x}}_j^{(t)}\right\| \leq \left\| {\boldsymbol{x}}^{(t+1)}_i - {\boldsymbol{x}}^{(t+1)}_j \right\| \leq (1 + \eta K) \left\|{\boldsymbol{x}}_i^{(t)} - {\boldsymbol{x}}_j^{(t)}\right\|.
\end{align}
\end{lemma}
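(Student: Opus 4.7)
The plan is to combine the equivariance property P1 with the continuity bound P2-$K$ through a single well-chosen permutation. Let $P \in \mathsf{FSym}(I)$ be the transposition that swaps the indices $i$ and $j$ and fixes every other index. I would first establish the one-step estimate $\|U^{(t)}_i(\mathcal X^{(t)}) - U^{(t)}_j(\mathcal X^{(t)})\| \le K\|{\boldsymbol{x}}^{(t)}_i - {\boldsymbol{x}}^{(t)}_j\|$; once this is in hand, both inequalities in the lemma follow by applying the ordinary and reverse triangle inequalities to the identity
\[
{\boldsymbol{x}}^{(t+1)}_i - {\boldsymbol{x}}^{(t+1)}_j = \bigl({\boldsymbol{x}}^{(t)}_i - {\boldsymbol{x}}^{(t)}_j\bigr) + \eta\bigl(U^{(t)}_i(\mathcal X^{(t)}) - U^{(t)}_j(\mathcal X^{(t)})\bigr).
\]

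To obtain the one-step estimate, I would evaluate both sides of \eqref{eq:k-continuous} on the pair $\mathcal X = \mathcal X^{(t)}$ and $\mathcal Y = P\mathcal X^{(t)}$. On the input side, $\mathcal X^{(t)}$ and $P\mathcal X^{(t)}$ differ only at the two indices $i,j$ (with the entries swapped), so a direct expansion gives $\|\mathcal X^{(t)} - P\mathcal X^{(t)}\| = \sqrt{2}\,\|{\boldsymbol{x}}^{(t)}_i - {\boldsymbol{x}}^{(t)}_j\|$; in particular the norm is finite, so P2-$K$ applies. On the output side, equivariance yields $U^{(t)}(P\mathcal X^{(t)}) = PU^{(t)}(\mathcal X^{(t)})$, so for every $k \notin \{i,j\}$ the $k$-th components of the two outputs agree, while the $i$- and $j$-components are exchanged. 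Consequently $\|U^{(t)}(\mathcal X^{(t)}) - U^{(t)}(P\mathcal X^{(t)})\| = \sqrt{2}\,\|U^{(t)}_i(\mathcal X^{(t)}) - U^{(t)}_j(\mathcal X^{(t)})\|$, and dividing the resulting continuity inequality by $\sqrt{2}$ produces the desired bound.

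I do not anticipate any serious obstacle. The only bookkeeping to be careful about is the coordinatewise collapse of the two norms onto the $\{i,j\}$-entries, so that the two $\sqrt{2}$ factors match and cancel, together with the remark that $P$ is finitary and hence lies in $\mathsf{FSym}(I)$ as P1 and P2-$K$ require. As a sanity check, this argument also recovers \Cref{lem:no-splitting}: setting ${\boldsymbol{x}}^{(t)}_i = {\boldsymbol{x}}^{(t)}_j$ forces the upper bound to vanish, so ${\boldsymbol{x}}^{(t+1)}_i = {\boldsymbol{x}}^{(t+1)}_j$, i.e.\ \Cref{lem:no-merging} is a quantitative Lipschitz refinement of the qualitative statement in \Cref{lem:no-splitting}.
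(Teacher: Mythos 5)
Your proposal is correct and follows essentially the same route as the paper: the paper also first proves the one-step estimate $\|U_i(\mathcal X^{(t)}) - U_j(\mathcal X^{(t)})\| \le K\|{\boldsymbol{x}}_i^{(t)} - {\boldsymbol{x}}_j^{(t)}\|$ (stated as its ``No Splitting'' auxiliary lemma) by applying P2-$K$ to the pair $(\mathcal X^{(t)}, P\mathcal X^{(t)})$ with $P$ the $\{i,j\}$-transposition, and then concludes by the triangle and reverse-triangle inequalities exactly as you do. Your bookkeeping is even slightly cleaner, since you observe directly that $U(\mathcal X^{(t)})$ and $PU(\mathcal X^{(t)})$ differ only in coordinates $i,j$, giving $\|U(\mathcal X^{(t)}) - U(P\mathcal X^{(t)})\| = \sqrt{2}\,\|U_i - U_j\|$ as an equality rather than the paper's one-sided bound.
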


This lemma implies the bi-Lipschitzness of the update rule between the manifolds formed by neurons at consecutive time steps $t$ and $t+1$. The fact that common learning rules induce bi-Lipschitz mappings is nontrivial, as such maps are known to preserve topological invariants (as we will show in the subsequent section) and control geometric distortions \citep{metric-space-textbook}.


Moreover, \Cref{lem:no-merging} also identifies a critical learning rate $\eta^* = 1/K$, beyond which the lower bound becomes vacuous, which we referred to as \emph{topological critical point} hereinafter. As we will see in the next section, this marks a phase transition from bijective, homeomorphic dynamics to merely surjective continuous dynamics.

\subsection{Topological Invariance}\label{sec:topological-invariance}

A crucial perspective is implied by the lemmas above: the entirety of the neurons can be seen as a set (or, manifold) $S \subset \mathbb{R}^D$, and the evolution of neurons can be viewed as the evolution of $S$. Of course, a crucial question is whether such a perspective is meaningful, which is the key question we answer in this section.  

Formally, let $S^{(t)} = \left\{\left.{\boldsymbol{x}}_i^{(t)} \right| i \in I\right\} \subseteq \mathbb R^D$ denote the set formed by all of the neurons in $\mathcal X^{(t)}$, equipped with the relative topology inherited from $\mathbb R^D$. Define function $\widehat U^{(t)}: S^{(t)} \to S^{(t+1)}$ by \begin{align}
\forall x \in I,  \widehat U^{(t)}\left({\boldsymbol{x}}_i^{(t)}\right) = {\boldsymbol{x}}_i^{(t+1)}.
\end{align}
Intuitively, $\widehat U^{(t)}$ describes the effect of $U^{(t)}$ on each point of $S^{(t)}$. \Cref{lem:no-splitting,lem:no-merging} together ensure that $\widehat U^{(t)}$ is well-defined and, under small learning rates, a bijection.

\begin{lemma}\label{lem:learning-rule-bijection}
    If $U^{(t)}$ satisfies P1 and P2-$K$, then $\widehat U^{(t)}$ is well-defined, and is a surjection. If additionally $\eta K < 1$, then $\widehat U^{(t)}$ is a bijection.
\end{lemma}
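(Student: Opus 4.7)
The plan is to verify the three assertions---well-definedness, surjectivity, and (under $\eta K<1$) injectivity---separately, each of which should fall out almost immediately from the two preceding lemmas once one is careful about the distinction between an indexed collection $\mathcal{X}^{(t)}$ and the underlying set $S^{(t)}$.

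First I would handle well-definedness. The only potential ambiguity is that a single point in $S^{(t)}$ may arise from multiple indices: if $i\neq j$ but ${\boldsymbol{x}}_i^{(t)}={\boldsymbol{x}}_j^{(t)}$, the two candidate values ${\boldsymbol{x}}_i^{(t+1)}$ and ${\boldsymbol{x}}_j^{(t+1)}$ for $\widehat U^{(t)}({\boldsymbol{x}}_i^{(t)})$ must coincide. This is precisely the content of \Cref{lem:no-splitting}, which uses P1 alone. Thus $\widehat U^{(t)}$ is unambiguously a function from $S^{(t)}$ into $\mathbb{R}^D$, and by construction its image lies in $S^{(t+1)}$, so it is a function $S^{(t)}\to S^{(t+1)}$. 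Surjectivity is then tautological: any element of $S^{(t+1)}$ is of the form ${\boldsymbol{x}}_j^{(t+1)}$ for some $j\in I$ by definition of $S^{(t+1)}$, and ${\boldsymbol{x}}_j^{(t)}\in S^{(t)}$ is a preimage. Note that neither well-definedness nor surjectivity uses the continuity property or any condition on $\eta$.

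For injectivity under $\eta K<1$, I would apply the lower bound in \Cref{lem:no-merging}. Take any two distinct points of $S^{(t)}$, written as ${\boldsymbol{x}}_i^{(t)}$ and ${\boldsymbol{x}}_j^{(t)}$; by distinctness $\|{\boldsymbol{x}}_i^{(t)}-{\boldsymbol{x}}_j^{(t)}\|>0$. The lemma gives
\[
\bigl\|{\boldsymbol{x}}_i^{(t+1)}-{\boldsymbol{x}}_j^{(t+1)}\bigr\| \;\geq\; (1-\eta K)\,\bigl\|{\boldsymbol{x}}_i^{(t)}-{\boldsymbol{x}}_j^{(t)}\bigr\| \;>\;0,
\]
so $\widehat U^{(t)}({\boldsymbol{x}}_i^{(t)})\neq \widehat U^{(t)}({\boldsymbol{x}}_j^{(t)})$. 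Combined with surjectivity, this yields the bijection claim.

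I anticipate no real obstacle here: the entire content reduces to recognizing that \Cref{lem:no-splitting} is exactly the reindexing-invariance statement required to push the dynamics from indexed collections down to a function on the underlying set, and \Cref{lem:no-merging} is exactly the ingredient that promotes this to a bijection once $\eta<1/K$. The only mild care needed is keeping the roles of indices and set-elements straight, so that the statement ``a preimage of ${\boldsymbol{x}}_j^{(t+1)}$ is ${\boldsymbol{x}}_j^{(t)}$'' is not circular---it is not, because $\widehat U^{(t)}$ is defined on points of $S^{(t)}$, and we have just shown the definition is independent of the representative index.
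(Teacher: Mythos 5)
Your proof is correct and follows essentially the same route as the paper's: well-definedness from \Cref{lem:no-splitting}, surjectivity by definition of $S^{(t+1)}$, and injectivity from the lower bound in \Cref{lem:no-merging} under $\eta K<1$. Your side remark that well-definedness and surjectivity need only P1 is accurate and a slight sharpening, but otherwise the arguments coincide.
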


One can show that, $\widehat U^{(t)}$ is not only a bijective, but also a homeomorphism between $S^{(t)}$ and $S^{(t+1)}$. This leads to our main theorem.


\begin{theorem}[Main]\label{thm:main} If $U^{(t)}$ satisfies P1 and P2-$K$, then 
\begin{enumerate}[label=\roman*.,itemsep=0pt,topsep=0pt, parsep=0pt,partopsep=0pt,leftmargin=20pt]
    \item  $\widehat U^{(t)}$ is a continuous surjection from $S^{(t)}$ to $S^{(t+1)}$;
    \item if $S^{(t)}$ is compact, then $S^{(t+1)}$ is also compact, and $\widehat U^{(t)}$ is a quotient map;
    \item if $\eta K < 1$, then $\widehat U^{(t)}$ is a homeomorphism;
    \item if $\eta K < 1$, and $U^{(t)}$ also satisfies P3, and $S^{(t)}$ is an open subset of $\mathbb R^D$, then $S^{(t+1)}$ is also open, and $\widehat U^{(t)}$ is a $C^1$-diffeomorphism.
\end{enumerate}
    
\end{theorem}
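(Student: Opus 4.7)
The plan is to reduce each part to a consequence of the Lipschitz bounds in \Cref{lem:no-merging} together with the surjection/bijection structure of $\widehat U^{(t)}$ from \Cref{lem:learning-rule-bijection}, with the inverse function theorem entering only at the end for the $C^1$-diffeomorphism conclusion. As a preliminary, since \Cref{lem:no-merging} was stated only for distinct indices $i \neq j$, I would first note that when ${\boldsymbol{x}}_i^{(t)} = {\boldsymbol{x}}_j^{(t)}$ the bounds hold trivially by \Cref{lem:no-splitting}, so both inequalities extend to arbitrary pairs of points in $S^{(t)}$.

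For (i), the upper bound makes $\widehat U^{(t)}$ a $(1 + \eta K)$-Lipschitz map on $S^{(t)}$, hence continuous; surjectivity is already in \Cref{lem:learning-rule-bijection}. For (ii), continuous images of compact sets are compact, so $S^{(t+1)}$ is compact; a continuous surjection from a compact space into a Hausdorff space is a closed map (closed subsets of a compact space are compact, and compact subsets of a Hausdorff space are closed), and closed continuous surjections are quotient maps. For (iii), when $\eta K < 1$ the lower bound gives $\|\widehat U^{(t)}({\boldsymbol{p}}) - \widehat U^{(t)}({\boldsymbol{q}})\| \geq (1 - \eta K)\|{\boldsymbol{p}} - {\boldsymbol{q}}\|$, so together with the bijectivity from \Cref{lem:learning-rule-bijection} the inverse $(\widehat U^{(t)})^{-1}$ is $(1 - \eta K)^{-1}$-Lipschitz and in particular continuous, making $\widehat U^{(t)}$ a homeomorphism.

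The main work lies in (iv). The idea is to realize $\widehat U^{(t)}$ locally as a $C^1$ map $\mathbb R^D \to \mathbb R^D$ with invertible Jacobian, then invoke the inverse function theorem. Fix ${\boldsymbol{y}} = {\boldsymbol{x}}_j^{(t)} \in S^{(t)}$ and a small ${\boldsymbol{z}}$ with ${\boldsymbol{y}} + {\boldsymbol{z}} \in S^{(t)}$ (possible because $S^{(t)}$ is open), and choose $k$ with ${\boldsymbol{x}}_k^{(t)} = {\boldsymbol{y}} + {\boldsymbol{z}}$. Applying equivariance to the transposition $P = (j\,k)$ yields
\begin{align*}
U_k^{(t)}(\mathcal X^{(t)}) \;=\; U_j^{(t)}(P\mathcal X^{(t)}) \;=\; U_j^{(t)}\!\left(\mathcal X^{(t)} + ({\boldsymbol{e}}_j - {\boldsymbol{e}}_k)\cdot {\boldsymbol{z}}\right) \;=\; g_j^{(t)}({\boldsymbol{y}} + {\boldsymbol{z}}, {\boldsymbol{z}}),
\end{align*}
where the last equality uses the definition \eqref{eq:def-g} of $g$ with the free perturbed index chosen as $k$. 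Consequently
\begin{align*}
\widehat U^{(t)}({\boldsymbol{y}} + {\boldsymbol{z}}) \;=\; ({\boldsymbol{y}} + {\boldsymbol{z}}) + \eta\, g_j^{(t)}({\boldsymbol{y}} + {\boldsymbol{z}}, {\boldsymbol{z}}),
\end{align*}
and P3 makes the right-hand side a $C^1$ function of ${\boldsymbol{z}}$ on a neighborhood of the origin. The bi-Lipschitz lower bound from (iii) then forces the Jacobian at ${\boldsymbol{z}} = 0$ to be injective, and hence, being square, invertible. The inverse function theorem yields a local $C^1$ diffeomorphism; combined with the global bijectivity from (iii), this promotes $\widehat U^{(t)}$ to a global $C^1$ diffeomorphism onto its image, and $S^{(t+1)}$ is open as the image of an open set under an open map.

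The main obstacle I anticipate is precisely the step in (iv) of translating the intrinsic, set-theoretic map $\widehat U^{(t)}$ on $S^{(t)}$ into a concrete $C^1$ formula in ambient coordinates: the naive expression ${\boldsymbol{y}} \mapsto {\boldsymbol{y}} + \eta\, U_{j({\boldsymbol{y}})}^{(t)}(\mathcal X^{(t)})$ is mediated by an index selector $j({\boldsymbol{y}})$ that need not depend continuously on ${\boldsymbol{y}}$. It is exactly equivariance together with the engineered definition of $g$ in P3 that absorbs this selector into a single $C^1$ function, after which the rest reduces to standard real analysis.
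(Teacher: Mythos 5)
Your proof is correct and follows essentially the same route as the paper's: parts (i)--(iii) reduce to the bi-Lipschitz bounds of \Cref{lem:no-merging} and the bijection structure of \Cref{lem:learning-rule-bijection}, and part (iv) uses the transposition trick to realize $\widehat U^{(t)}$ locally as the $C^1$ function $g$ of P3 and then invokes the inverse function theorem with Jacobian invertibility coming from the lower Lipschitz bound. The only genuine difference is how openness of $S^{(t+1)}$ is obtained: the paper invokes Invariance of Domain (needing only continuity and injectivity, established before the $C^1$ analysis), while you derive it from the fact that a $C^1$ map with everywhere-invertible Jacobian is an open map --- both are correct, and your version uses slightly less machinery given that the $C^1$ structure is being built anyway. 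A small bonus of your write-up: your formula $\widehat U^{(t)}(\boldsymbol y+\boldsymbol z) = (\boldsymbol y+\boldsymbol z) + \eta\, g_j^{(t)}(\boldsymbol y+\boldsymbol z,\boldsymbol z)$ correctly keeps the base point and the $\eta$ factor and has the arguments of $g$ in the order matching \eqref{eq:def-g}, whereas the paper's displayed chain drops these and swaps the arguments --- cosmetic slips in the paper, but worth being aware that your version is the careful one.
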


See \Cref{sec:proof-of-main-thm} for the proof of \Cref{thm:main}. This result shows that when the learning rate is below the critical threshold $\eta^* = 1/K$, the neuronal set $S^{(t)}$ evolves through homeomorphisms (or diffeomorphisms if smoothness holds). Consequently, the topology of $S^{(t)}$ remains invariant across training: if the neurons initially form a space homeomorphic to a circle, torus, or any other manifold, they will preserve that topological type for all time. If the neurons are initially separated points that are far away from each other, this statement has a simple interpretation: neurons cannot merge unless they were identical at initialization, and once merged, they cannot be separated. This implies that the learning process can only locally deform the neuron topology, either by translating, expanding, or contracting local neuron densities.

That a sufficiently smooth learning rule induces a diffeomorphism of the neuron manifold both lends support to the widespread use of mean-field theories (including the NTK theory) for understanding neural networks training at a small learning rate \citep{mei2019mean, jacot2018neural}, and explains their breakdown at a large learning rate. The diffeomorphic evolution ensures that the neuron distribution $P_t(w)$ obeys standard change-of-variable formulas, leading to Vlasov-type equations in the infinite-width limit \citep{spohn2012large}. Since our theory is independent of the specific architecture of the neural network, it could lead to the most general type of mean-field theory for deep learning, which we leave as a future direction. 

At large learning rates, by contrast, homeomorphic evolution breaks down. Merging and more general topological changes become possible so that the learning process can no longer be described as local interactions and the mean-field theories no longer apply. This transition, from topology-preserving to topology-changing dynamics, constitutes the \emph{topological critical point} predicted by our theory and is verified in our experiments (\Cref{sec:experiments}). At the same time, the large-learning-rate phase cannot change topology without bound because the upper bound in Lemma~\ref{lem:no-merging} always holds, and so neuron splitting remains impossible. This is also topologically characterized by the fact that the induced mapping $\tilde{U}$ is still a quotient map, meaning that it inherits a \textit{coarser} topology from the previous neuron distribution. The implication of reaching a coarser topology is that the training reduces the expressivity/capacity of these neural networks and therefore simplifies them. This can be understood directly from the perspective of permutation symmetries, where merging (or gluing) two neurons is the same as transitioning to the symmetric state of the permutation symmetry, which directly reduces the effective number of parameters of the model by the number of weights in a neuron (e.g., see Proposition 3 of \cite{ziyin2025remove}).

\paragraph{Role of $K$.} So far, we have formally treated the smoothness parameter $K$ as a global quantity, which leads to the elegant and easy-to-state results above. However, it is much better to conceptually treat $K$ as a local quantity in a small neighborhood around the current parameter $\theta$: $K\approx K(\theta)$. When the learning rule is SGD, this $K(\theta)$ can be approximated by the largest eigenvalue of the local Hessian ${\lambda}_{\max}(H(\theta))$. In this more dynamical perspective, $K$ can be seen as a dynamically evolving quantity. When viewed with the phenomenon of the edge of stability \citep{cohen2021gradient, wu2018sgd}, this picture suggests a two-phase perspective of the learning process of common neural networks, where the first phase of training focuses on optimizing the loss and learning the task, while the second phase of learning is a simplification process, where the model tends to simpler and coarser topologies, a process that could be related to phenomena such as grokking \citep{power2022grokking}.

\subsection{Measure Invariance}

Beyond the invariance of topology, one can also ask ``how many'' neurons are stacked at a single point of $S^{(t)}$ and how their density evolves over time. Formally speaking, this corresponds to studying the probability distribution on $S^{(t)}$ obtained by pushing forward a universal probability distribution defined on the index set $I$. In this subsection, we show that this distribution is also preserved under the update rule.

Formally, in this subsection we assume a concrete structure on the index set $I$. Assume there is a $\sigma$-algebra $\mathcal F$ on $I$ and a probability measure $m: \mathcal F \to [0,1]$. At any time $t \in \mathbb N$, define the mapping $r^{(t)}: i \mapsto {\boldsymbol{x}}_i^{(t)}$, and let it be a measurable function from $I$ to $S^{(t)}$, where $S^{(t)}$ carries the corresponding Borel $\sigma$-algebra.\footnote{These assumptions are automatically satisfied with a finite $I$.} For each time $t$, we define a measure $\mu^{(t)}$ on $S^{(t)}$ as the push-forward of $m$ under $r^{(t)}$, i.e.\begin{align}
\forall \text{open set $A$ on $S^{(t)}$, } \mu^{(t)}(A) = m\left({r^{(t)}}^{-1}(A)\right).
\end{align}
Clearly, $\mu^{(t)}$ is also a probability measure.

\begin{theorem}\label{thm:measure-preserving}
    Suppose $U^{(t)}$ satisfies P1 and P2-$K$, and suppose $\eta K < 1$, then $\widehat U^{(t)}$ is a probability isomorphism between $\left( S^{(t)}, \mu^{(t)}\right)$ and $\left( S^{(t+1)}, \mu^{(t+1)}\right)$, i.e. $\widehat U^{(t)}$ and $\left.\widehat U^{(t)}\right.^{-1}$ are both measure-preserving bijections. 
\end{theorem}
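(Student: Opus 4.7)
The plan is to reduce the measure isomorphism claim to the already-established fact (from \Cref{thm:main}, item iii) that under $\eta K < 1$, the map $\widehat U^{(t)}$ is a homeomorphism between $S^{(t)}$ and $S^{(t+1)}$. Combined with the observation that the indexing maps $r^{(t)}$ and $r^{(t+1)}$ are intertwined by $\widehat U^{(t)}$, the pushforward measures should automatically match up.

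First I would note the factorization $r^{(t+1)} = \widehat U^{(t)} \circ r^{(t)}$, which is immediate from the definitions: for every $i \in I$, we have $r^{(t+1)}(i) = {\boldsymbol{x}}_i^{(t+1)} = \widehat U^{(t)}({\boldsymbol{x}}_i^{(t)}) = \widehat U^{(t)}(r^{(t)}(i))$. Next, since $\widehat U^{(t)}$ is a homeomorphism between $S^{(t)}$ and $S^{(t+1)}$ (both equipped with the relative topology from $\mathbb R^D$ and hence the induced Borel $\sigma$-algebras), both $\widehat U^{(t)}$ and its inverse are Borel-measurable. In particular, for every Borel set $A \subseteq S^{(t+1)}$, the preimage $(\widehat U^{(t)})^{-1}(A)$ is a Borel subset of $S^{(t)}$, so that $(r^{(t+1)})^{-1}(A) = (r^{(t)})^{-1}\bigl((\widehat U^{(t)})^{-1}(A)\bigr)$ lies in $\mathcal F$ by measurability of $r^{(t)}$.

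Applying the definition of pushforward, this chain of equalities gives
\begin{equation*}
\mu^{(t+1)}(A) = m\bigl((r^{(t+1)})^{-1}(A)\bigr) = m\bigl((r^{(t)})^{-1}((\widehat U^{(t)})^{-1}(A))\bigr) = \mu^{(t)}\bigl((\widehat U^{(t)})^{-1}(A)\bigr),
\end{equation*}
which is exactly the statement that $\mu^{(t+1)} = \widehat U^{(t)}_{*}\mu^{(t)}$, i.e. $\widehat U^{(t)}$ is measure-preserving. Since $\widehat U^{(t)}$ is a bijection with Borel-measurable inverse, the identity can be rewritten as $\mu^{(t)} = \bigl(\widehat U^{(t)}\bigr)^{-1}_{*}\mu^{(t+1)}$, showing that $(\widehat U^{(t)})^{-1}$ is also measure-preserving. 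Therefore $\widehat U^{(t)}$ is a probability isomorphism between $(S^{(t)}, \mu^{(t)})$ and $(S^{(t+1)}, \mu^{(t+1)})$.

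The argument is essentially bookkeeping once \Cref{thm:main} is in hand, so I do not anticipate a substantive obstacle; the only subtlety is the implicit measurability requirement on $r^{(t)}$ (which the paper assumes) and the fact that the Borel structure on $S^{(t)}$ is the one inherited from $\mathbb R^D$, so that the homeomorphism conclusion of \Cref{thm:main} directly upgrades to a Borel isomorphism. If one wanted to weaken the $\eta K < 1$ assumption, the factorization $r^{(t+1)} = \widehat U^{(t)} \circ r^{(t)}$ still yields $\mu^{(t+1)} = \widehat U^{(t)}_{*}\mu^{(t)}$, but $\widehat U^{(t)}$ would then only be a surjective quotient map (cf.\ \Cref{thm:main}.ii), so the inverse need not exist and the isomorphism conclusion genuinely requires the sub-critical learning-rate regime.
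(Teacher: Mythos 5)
Your proof follows essentially the same route as the paper's: both exploit the factorization $r^{(t+1)} = \widehat U^{(t)} \circ r^{(t)}$ together with the bijectivity of $\widehat U^{(t)}$ (from \Cref{lem:learning-rule-bijection}) to transport the pushforward measure, arriving at $\mu^{(t+1)}(A) = \mu^{(t)}\bigl((\widehat U^{(t)})^{-1}(A)\bigr)$ and its symmetric counterpart. Your version is marginally more careful, explicitly invoking the homeomorphism conclusion of \Cref{thm:main} to secure Borel-measurability of $\widehat U^{(t)}$ and its inverse, a point the paper's proof leaves implicit.
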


\Cref{thm:measure-preserving} shows that the update rule preserves not only the topology of $S^{(t)}$, but also the density of neurons across it.
\Cref{thm:main} and \Cref{thm:measure-preserving} might remind readers of the topological dynamical systems and measure-preserving dynamical systems \citep{gottschalk1955topological}. However, what we study is more general because in this context the update rule $\widehat U^{(t)}$ as well as the space $S^{(t)}$ itself are both time-dependent, which violates the definition of the topological/measure-preserving dynamical systems. Therefore, the classical recurrence theorems in those fields cannot be directly applied.

\section{Examples: Gradient Descent and Adam}\label{sec:examples}

Starting from this section, we connect our abstract theoretical discussions to actual optimization algorithms, by proving that the properties of the update rule $U^{(t)}$ we have used are satisfied by a wide range of optimization algorithms. Here we analyze two of the most popular ones, namely Gradient Descent (GD) and Adam, as illustrative cases. 

\vspace{-3mm}
\subsection{Gradient Descent}\label{sec:example-gd}
\vspace{-2mm}

Let us assume that there is a loss function $L: \left(\mathbb R^D\right)^I \to \mathbb R$ that maps the neurons to a scalar loss value, and the update rule $U^{(t)}$ is the gradient descent update\footnote{For this case, the right-hand side of \eqref{eq:def-U-in-GD} is independent of time $t$, and therefore $U^{(t)}$ is the same at each time. However, we would love to keep this redundancy of notation, because here we have actually made a subtle (but harmless) simplification, that we implicitly assume all neurons that are to be updated have permutation symmetry, while in practice there can be a part of learnable parameters that are not permutation-invariant, absorbing which into $L$, although does not affect our discussion here, will make the loss function $L$ time-dependent and so does the update rule.}:\begin{align}
U^{(t)}(\mathcal X) = -\nabla L(\mathcal X). \label{eq:def-U-in-GD}
\end{align}

Now we prove that, the equivariance property of $U$ comes from the permutation symmetry of $L$ and the continuity property comes from the smoothness of $L$.

\begin{proposition}\label{lem:symmetry-implies-equivariance}
    If $L$ has $\mathsf{FSym}(I)$-symmetry, i.e. \begin{align}
    \forall \mathcal X \in \left(\mathbb R^D\right)^I, \forall P \in \mathsf{FSym}(I), L(\mathcal X) = L(P\mathcal X),\label{eq:loss-symmetry}
    \end{align}
    then $U^{(t)}$ defined in \eqref{eq:def-U-in-GD} satisfies P1. 
\end{proposition}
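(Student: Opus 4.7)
The goal is to show $P\,U^{(t)}(\mathcal X) = U^{(t)}(P\mathcal X)$ for every $P \in \mathsf{FSym}(I)$, which, since $U^{(t)} = -\nabla L$, reduces to the identity $\nabla L(P\mathcal X) = P\,\nabla L(\mathcal X)$. The plan is to derive this by differentiating the symmetry assumption \eqref{eq:loss-symmetry} in $\mathcal X$ and using that a permutation is an orthogonal linear operator on $(\mathbb R^D)^I$.

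First I would fix $P \in \mathsf{FSym}(I)$ and view it as a linear operator on $(\mathbb R^D)^I$; because $P$ has finite support, only finitely many coordinates are shuffled, so the action is well-defined and preserves the inner product inducing $\|\cdot\|$. Applying the chain rule to both sides of $L(P\mathcal X) = L(\mathcal X)$ then gives $P^\top \nabla L(P\mathcal X) = \nabla L(\mathcal X)$, and since $P^\top = P^{-1}$, composing with $P$ yields $\nabla L(P\mathcal X) = P\,\nabla L(\mathcal X)$, which is exactly P1. Alternatively, one can argue componentwise: $(P\mathcal X)_j = x_{P(j)}$ depends on $x_i$ only when $j = P^{-1}(i)$, so the chain rule gives $\partial_{x_i} L(P\mathcal X) = (\nabla L)_{P^{-1}(i)}(P\mathcal X)$, and combining this with $\partial_{x_i} L(P\mathcal X) = \partial_{x_i} L(\mathcal X)$ produces the same identity after relabeling.

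I expect no real obstacle in this argument; the proof is essentially one line of chain rule plus orthogonality of $P$. The only subtlety worth flagging is that $I$ may be uncountable, but since $P \in \mathsf{FSym}(I)$ acts as the identity outside a finite set of indices, the chain rule and the ``matrix'' interpretation of $P$ apply without any convergence or infinite-dimensional differentiability issues, and the identity can be checked one coordinate at a time.
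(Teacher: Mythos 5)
Your proposal is correct and uses essentially the same argument as the paper: differentiate the invariance $L(P\mathcal X)=L(\mathcal X)$ via the chain rule and use orthogonality of the permutation operator to rearrange $P^\top \nabla L(P\mathcal X)=\nabla L(\mathcal X)$ into $\nabla L(P\mathcal X)=P\nabla L(\mathcal X)$. The only difference is cosmetic: the paper makes the ``check one coordinate at a time'' remark rigorous by explicitly restricting $L$ to the finite support $J=\{i: Pi\neq i\}$ and applying the finite-dimensional chain rule there, whereas you flag the same point informally.
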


\begin{proposition}\label{lem:continuous-of-gd}
    If there exists a constant $K > 0$, such that for any $\mathcal X, \mathcal Y \in \left(\mathbb R^D\right)^I$ and any $i \in I$, we have \begin{align}
\left\| \nabla L\left({\mathcal {X}}\right) - \nabla  L\left(\mathcal Y\right)\right\| \leq K \left\|   \mathcal X - \mathcal Y\right\|, \label{eq:gd-smoothness}
\end{align}
then $U^{(t)}$ defined in \eqref{eq:def-U-in-GD} satisfies P2-$K$.
\end{proposition}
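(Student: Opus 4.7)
The plan is essentially to unpack definitions and apply the hypothesis directly. Since $U^{(t)}(\mathcal X) = -\nabla L(\mathcal X)$ does not depend on $t$ in this setting, I would fix an arbitrary $t$ and take any $\mathcal X, \mathcal Y \in \left(\mathbb R^D\right)^I$. If $\mathcal X$ and $\mathcal Y$ differ in infinitely many entries, then by the convention set in the notation paragraph $\left\|\mathcal X - \mathcal Y\right\| = +\infty$, so the P2-$K$ bound is vacuous. Otherwise, $\left\|\mathcal X - \mathcal Y\right\|$ is finite and I compute
\begin{align*}
\left\| U^{(t)}(\mathcal X) - U^{(t)}(\mathcal Y)\right\| = \left\| -\nabla L(\mathcal X) + \nabla L(\mathcal Y)\right\| = \left\| \nabla L(\mathcal X) - \nabla L(\mathcal Y)\right\| \leq K \left\|\mathcal X - \mathcal Y\right\|,
\end{align*}
where the final inequality is exactly the hypothesis \eqref{eq:gd-smoothness}. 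Since this holds for all admissible $\mathcal X, \mathcal Y$ and all $t$, the update rule satisfies P2-$K$.

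There is no genuine obstacle here: the proposition is a one-line consequence of the sign symmetry of the Euclidean norm and the assumed Lipschitz continuity of $\nabla L$. The only thing worth highlighting in writing up the proof is the compatibility between the two norms in play, namely that the norm $\left\|\cdot\right\|$ on $(\mathbb R^D)^I$ defined in the notation paragraph is consistent with the norm implicitly used on the codomain of $\nabla L$ (both are the $\ell^2$ norm over the differing coordinates), so that the hypothesis can be invoked without modification. Once that identification is made, the conclusion is immediate.
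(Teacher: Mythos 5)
Your proof is correct and takes essentially the same one-line approach as the paper: both simply observe that $U^{(t)} = -\nabla L$, so the norm of the difference of updates equals the norm of the difference of gradients, which is bounded by $K\|\mathcal X - \mathcal Y\|$ by hypothesis. Your additional remarks on the infinite-difference case and on norm compatibility are fine but not strictly necessary.
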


\begin{wrapfigure}[16]{r}{0.43\linewidth}
  \centering
  \includegraphics[width=\linewidth]{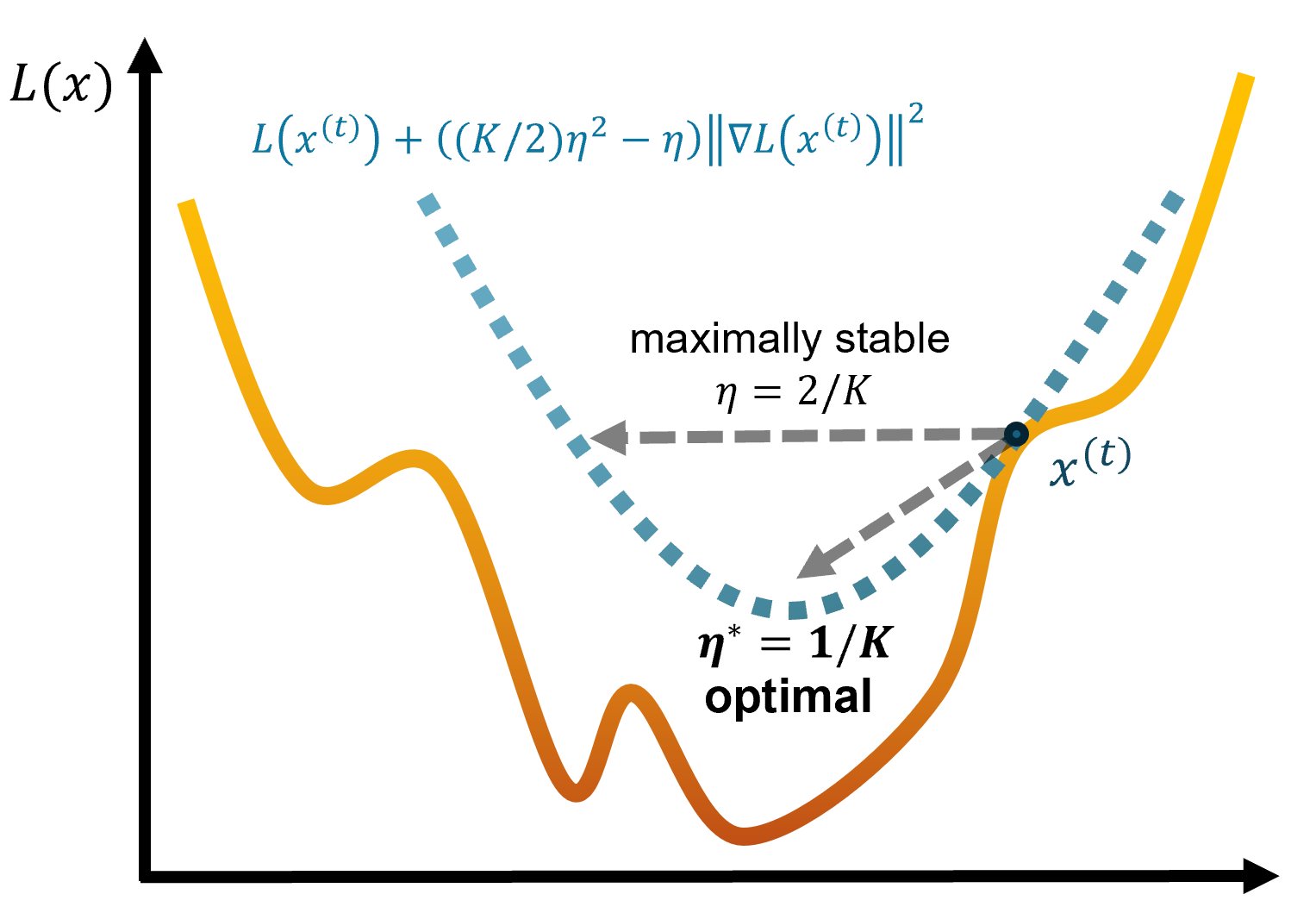}
  \caption{\small \textbf{An optimization perspective of the topological critical point}. The topological critical point $\eta^* = 1/K$ corresponds to the step size that reduces the loss optimally, while the critical step size found by \cite{cohen2021gradient} corresponds to the largest one ensuring loss decay.}
  \label{fig:lr-illust}
\end{wrapfigure}

\paragraph{Remark.} As noted in \Cref{sec:preliminaries}, the smoothness condition in \eqref{eq:gd-smoothness} is precisely the standard smoothness assumption widely used in optimization theory \citep{large-scale-optimization}. Under this assumption, the topological critical point $\eta^* = \tfrac{1}{K}$ in our theory coincides with the optimal step size suggested by a second-order Taylor expansion of the loss around ${\boldsymbol{x}}$. 
Specifically, \eqref{eq:gd-smoothness} implies
\begin{align}
& L({\boldsymbol{x}} - \eta {\nabla L}({\boldsymbol{x}})) 
\\  \leq~ & L({\boldsymbol{x}}) - \eta \left\|\nabla L({\boldsymbol{x}})\right\|^2 + \frac {K\eta^2} 2 \left\|\nabla L({\boldsymbol{x}})\right\|^2
\\  =~ & L({\boldsymbol{x}}) + \left(\frac{K}{2} \eta^2 - \eta\right)\left\|\nabla L({\boldsymbol{x}})\right\|^2. \label{eq:loss-expand}
\end{align}
It follows that the optimal decrease in loss occurs at $\eta^* = \tfrac{1}{K}$, which matches the topological critical point identified in our framework and differs only by a constant factor from the classical upper bound on stable step sizes. 
See \Cref{fig:lr-illust} for an illustration. This correspondence suggests there might be a hidden connection between neuron topology and optimization, under the context of gradient descent and the presence of permutation symmetry: \textit{The loss can be stably optimized only when the topology of the neurons is preserved}. 

\subsection{Adam} 

Besides GD, other more complicated and modern optimizers are usually stateful -- they need to keep track of some values in the process of training, which at first sight seems incompatible with our definition of $U^{(t)}$, since our $U^{(t)}$ is stateless by definition. This seemingly difficulty can be resolved by a small trick: we can view the state of an optimizer as a part of the neurons, thereby rewriting the update rule in a stateless form. As an illustration, in this section we prove that Adam \citep{adam-optimizer}, another widely-used optimizer in deep learning, also fits in our framework.

Specifically, suppose ${\boldsymbol{\theta}}_i^{(t)}$ is the $i$-th neuron in the neural network at time $t$, the collection of particles is defined as $\mathcal X^{(t)}  = \left\{\left({\boldsymbol{\theta}}_i^{(t)}, {\boldsymbol{m}}_i^{(t)}, {\boldsymbol{v}}^{(t)}_i\right)\right\}_{i \in I}$, where ${\boldsymbol{m}}_i^{(t)}$ and ${\boldsymbol{v}}_i^{(t)}$ are the first order and second order moment estimators in Adam. The update rule is then defined as\footnote{Here and in the appendix, we optionally write the tuple in the tall form for better presentation.}
{\begin{align}
U^{(t)}: \left\{\begin{pmatrix}{\boldsymbol{\theta}}_i\\{\boldsymbol{m}}_i\\ {\boldsymbol{v}}_i\end{pmatrix}\right\}_{i \in I} \mapsto \left\{\begin{pmatrix} - \frac{{\boldsymbol{m}}_i / (1 - \beta_1^t)}{\epsilon + \sqrt{{\boldsymbol{v}}_i / (1-\beta_2^t)}} \\ \frac{1-\beta_1}{\eta} \left[ \nabla_i L(\Theta) - {\boldsymbol{m}}_i 
\right]
\\ \frac{1-\beta_2}{\eta} \left[\nabla_i L(\Theta)^2 -  {\boldsymbol{v}} _i
\right]
\end{pmatrix}\right\}_{i \in I},\label{eq:adam-var}
\end{align}}where $\beta_1$ and $\beta_2$ are the decay rates for the first- and second-order moment estimators, respectively; $ \Theta = \{{\boldsymbol{\theta}}_i\}_{i\in I}$ is the collection of the neurons; and all scalar operations (square, division, square root) are taken element-wise. It is straightforward to check that the neuron update in \eqref{eq:adam-var} is equivalent to the standard Adam rule. One can now prove the following theorem.

\begin{proposition} \label{prop:adam-symmetry}
    If $L$ has $\mathsf{FSym}(I)$-symmetry (\eqref{eq:loss-symmetry}), then $U^{(t)}$ defined in \eqref{eq:adam-var} satisfies P1.
\end{proposition}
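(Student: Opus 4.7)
The plan is to reduce the equivariance of the Adam update to the equivariance of $\nabla L$, which is already handled by Proposition~\ref{lem:symmetry-implies-equivariance}. First, I would unpack what P1 demands for the enlarged state: for $\mathcal X = \{({\boldsymbol{\theta}}_i, {\boldsymbol{m}}_i, {\boldsymbol{v}}_i)\}_{i \in I}$ and $P \in \mathsf{FSym}(I)$, the permutation acts blockwise, $(P\mathcal X)_i = ({\boldsymbol{\theta}}_{P(i)}, {\boldsymbol{m}}_{P(i)}, {\boldsymbol{v}}_{P(i)})$, so what must be checked is that each of the three output blocks satisfies $U^{(t)}(P\mathcal X)_i = U^{(t)}(\mathcal X)_{P(i)}$ for every $i \in I$.

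Second, I would treat the three output coordinates separately. The $\boldsymbol{\theta}$-coordinate $-({\boldsymbol{m}}_i/(1-\beta_1^t))/(\epsilon + \sqrt{{\boldsymbol{v}}_i/(1-\beta_2^t)})$ is a pointwise function of the $i$-th entry alone, so substituting $P\mathcal X$ produces exactly the $P(i)$-th entry of $U^{(t)}(\mathcal X)$. The $\boldsymbol{m}$- and $\boldsymbol{v}$-coordinates decompose into a local part ($-{\boldsymbol{m}}_i$ or $-{\boldsymbol{v}}_i$), which is again manifestly equivariant by the same argument, and a nonlocal part $\nabla_i L(\Theta)$ (or its elementwise square), which requires invoking the symmetry of $L$.

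Third, I would lift the assumed $\mathsf{FSym}(I)$-symmetry of $L$ from $\Theta$ to the enlarged collection $\mathcal X$: since $L$ depends only on the $\boldsymbol{\theta}$-block, permuting the ${\boldsymbol{m}}$- and ${\boldsymbol{v}}$-blocks simultaneously is harmless, and the induced function $\tilde L(\mathcal X) := L(\Theta)$ is $\mathsf{FSym}(I)$-symmetric on $(\mathbb{R}^{3D})^I$. Proposition~\ref{lem:symmetry-implies-equivariance} then gives $\nabla_i L(P\Theta) = \nabla_{P(i)} L(\Theta)$, and combining this with the local equivariance of the $-{\boldsymbol{m}}_i$ and $-{\boldsymbol{v}}_i$ terms (and noting that elementwise squaring commutes with indexing) yields equivariance of the $\boldsymbol{m}$- and $\boldsymbol{v}$-blocks. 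Assembling the three blocks gives $PU^{(t)}(\mathcal X) = U^{(t)}(P\mathcal X)$.

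The main obstacle is conceptual rather than computational: one must be careful that enlarging the state space from $\Theta$ to the Adam triple $({\boldsymbol{\theta}},{\boldsymbol{m}},{\boldsymbol{v}})$ does not break the symmetry, and that the blockwise action of $P$ on $\mathcal X$ is the correct notion of permutation under which equivariance is tested. Once this lift is made explicit, everything reduces to a substitution and an appeal to Proposition~\ref{lem:symmetry-implies-equivariance}.
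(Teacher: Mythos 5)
Your proposal is correct and follows essentially the same route as the paper's own proof: both arguments reduce to a direct substitution of $P\mathcal X$ into the Adam update, noting that the $\boldsymbol\theta$-block and the local parts of the $\boldsymbol m$- and $\boldsymbol v$-blocks are coordinatewise (hence trivially equivariant), and then invoking the gradient equivariance $\nabla_i L(P\Theta) = \nabla_{P(i)} L(\Theta)$ established in the proof of Proposition~\ref{lem:symmetry-implies-equivariance}. Your additional remark about lifting the $\mathsf{FSym}(I)$-symmetry of $L$ from $\Theta$ to the enlarged triple $(\boldsymbol\theta,\boldsymbol m,\boldsymbol v)$ makes explicit a step the paper leaves implicit, but it is the same argument.
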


\begin{figure}[t!]
    \centering
    \begin{minipage}{0.32\linewidth}
    \centering
    \includegraphics[width=\linewidth]{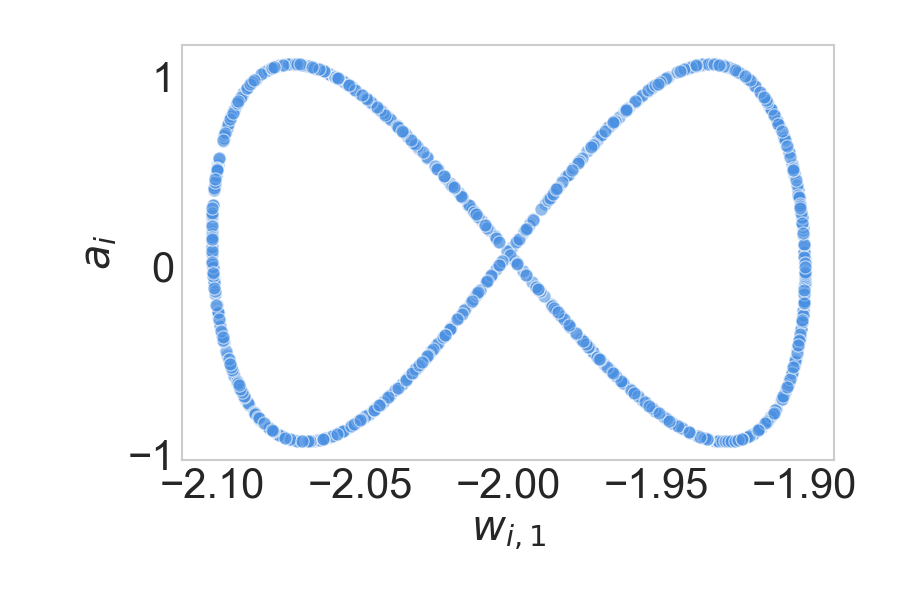}
     (a) Initialization
    \end{minipage}
    \hfill
    \begin{minipage}{0.32\linewidth}
    \centering
    \includegraphics[width=\linewidth]{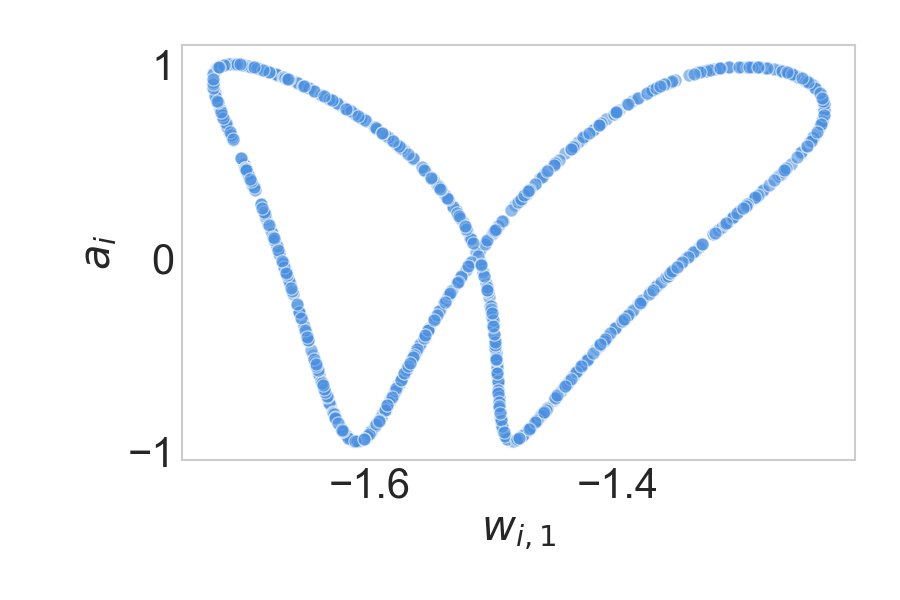}
     (b) End of training, small $\eta$
    \end{minipage}
    \hfill
    \begin{minipage}{0.32\linewidth}
    \centering
    \includegraphics[width=\linewidth]{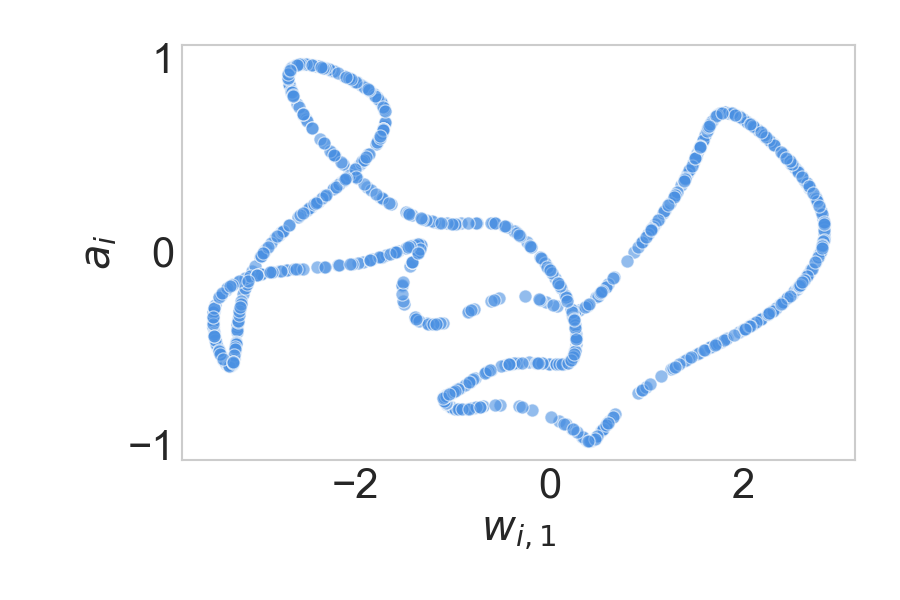}
     (c) End of training, large $\eta$
    \end{minipage}
    \caption{ \textbf{Topology of a 2D neural network with GD.} 
The neurons are initialized on a genus-2 surface and optimized with GD. We visualize the topology of 2D and 3D networks before and after training under different step sizes $\eta$. For small step sizes, the training may deform the geometric arrangement of the neurons but the topology remains unchanged. In contrast, for large step sizes, the topological structure can change substantially. These results consistently verify our theoretical predictions that while the geometry of the neurons can be affected by training, the underlying topology is stable under small learning rates but fragile under large ones.} 
    \label{fig:exp-2d}

    \centering
    \begin{minipage}{0.32\linewidth}
    \centering
    \includegraphics[width=\linewidth]{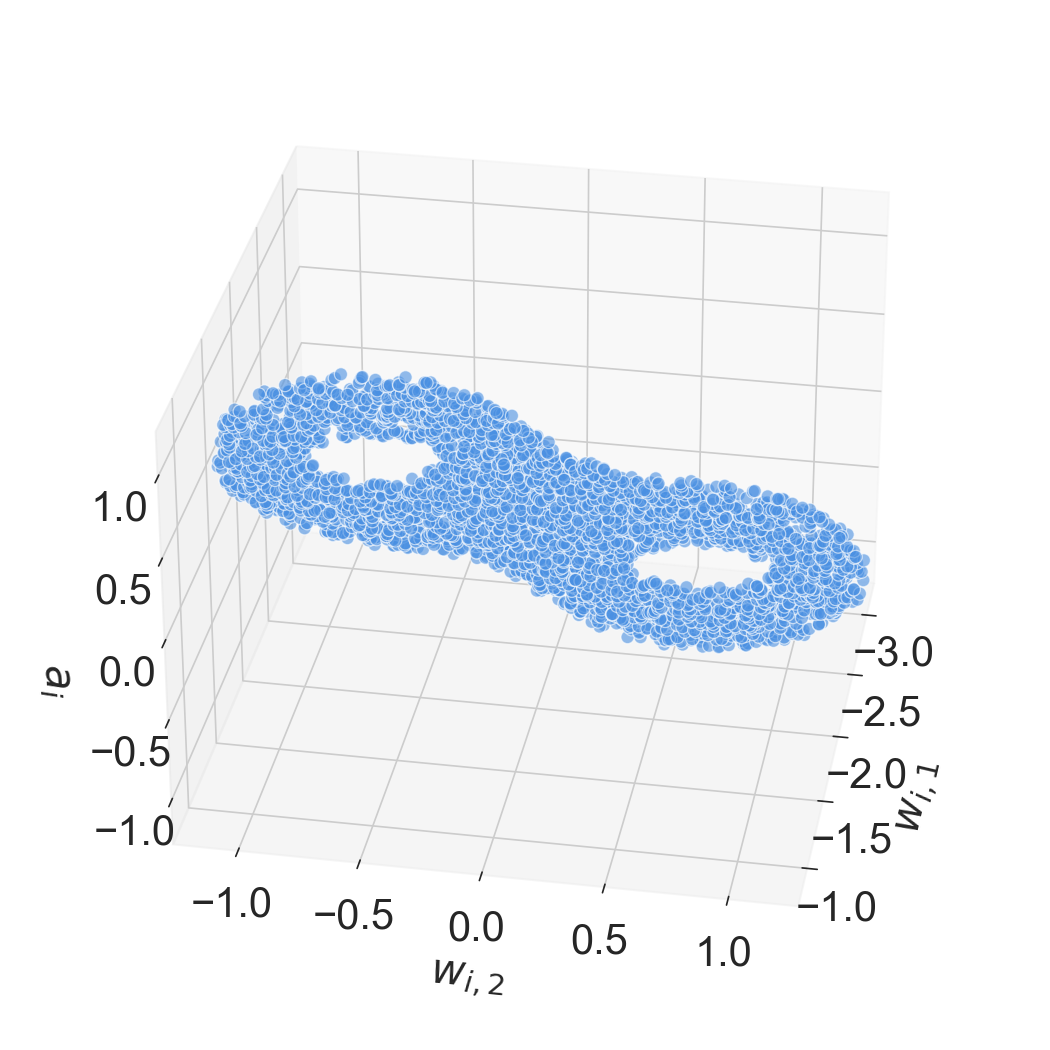}
     (a) Initialization
    \end{minipage}
    \hfill
    \begin{minipage}{0.32\linewidth}
    \centering
    \includegraphics[width=\linewidth]{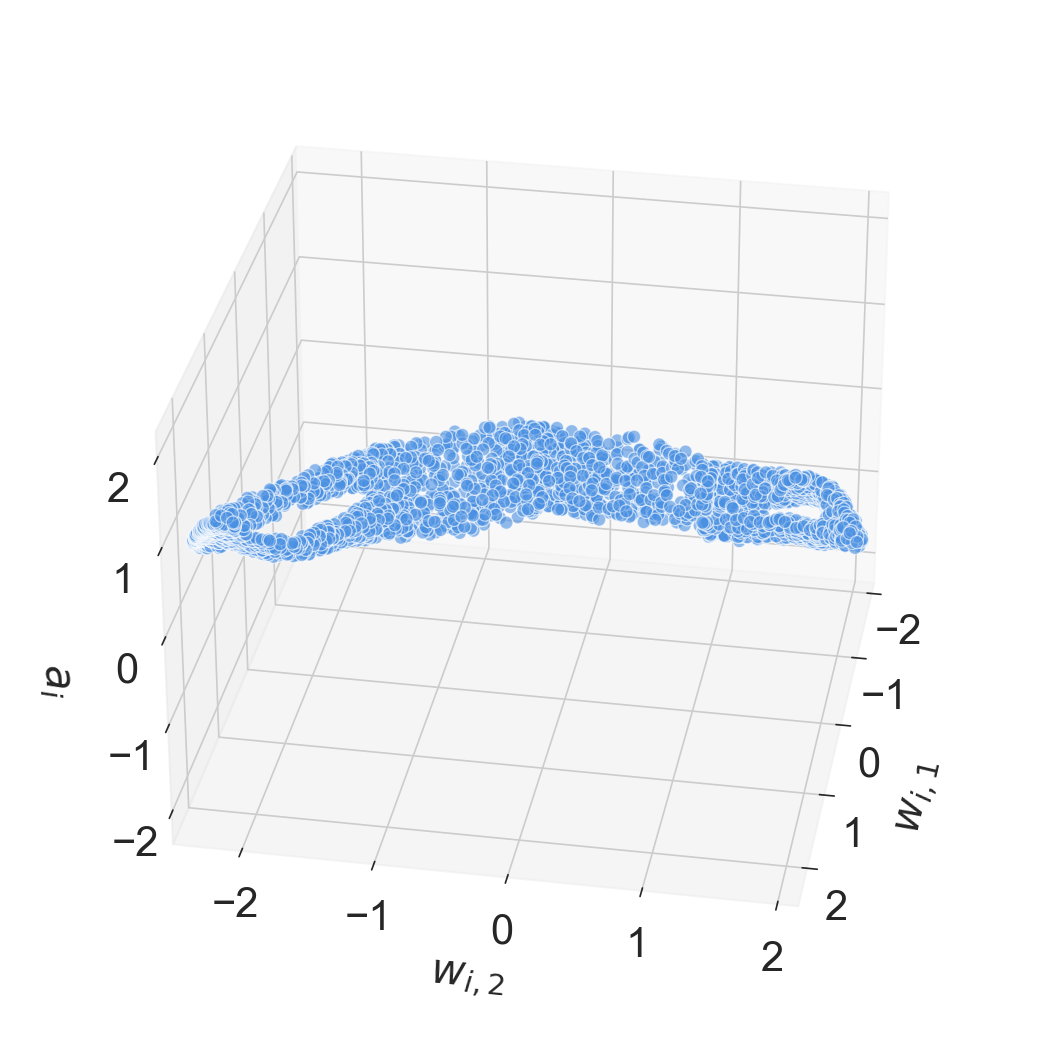}
     (b) End of training, small $\eta$
    \end{minipage}
    \hfill
    \begin{minipage}{0.32\linewidth}
    \centering
    \includegraphics[width=\linewidth]{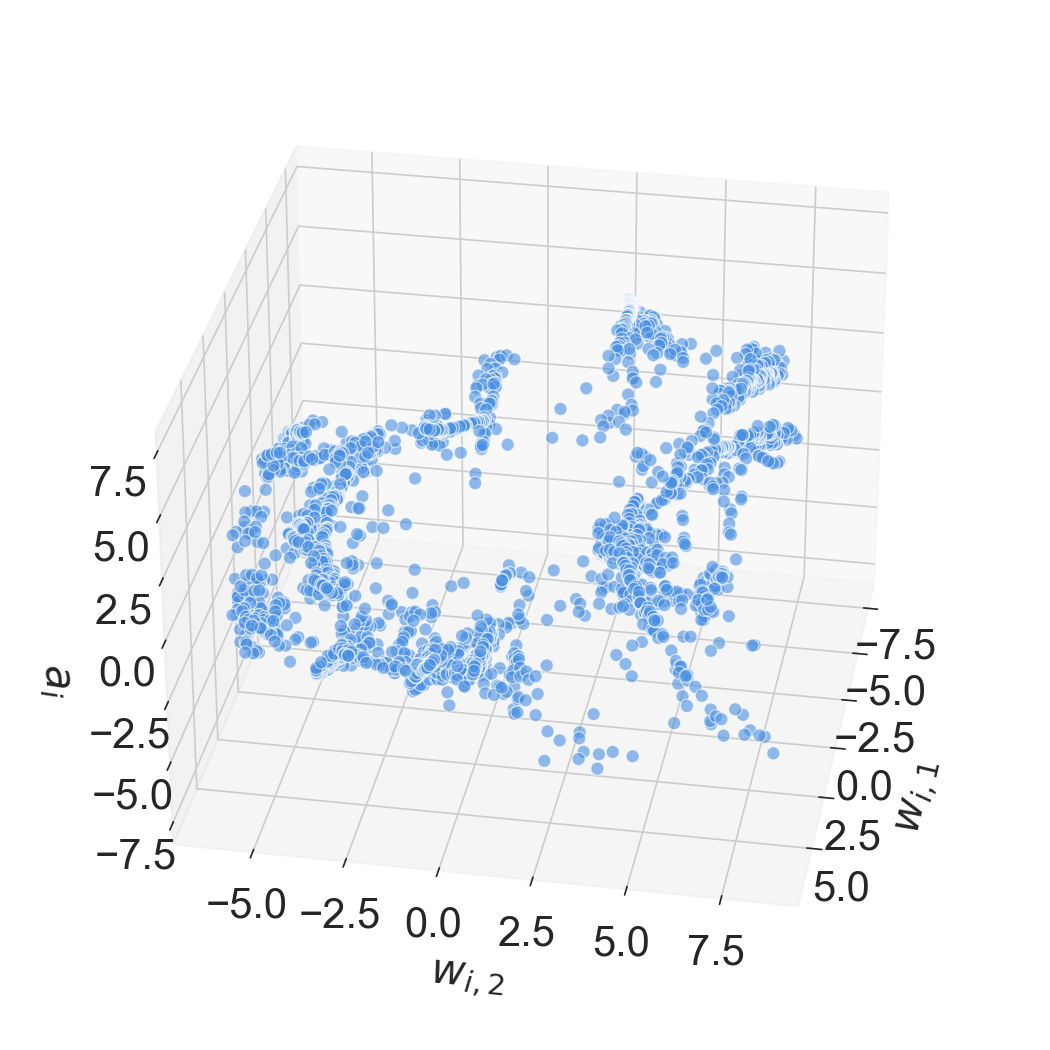}
     (c) End of training, large $\eta$
    \end{minipage}
    \caption{ \textbf{Topology of a 3D neural network with GD.} 
The neurons are initialized on a genus-2 surface and optimized with GD.}
    \label{fig:exp-3d}
\end{figure}

\section{Experiments}\label{sec:experiments}

To illustrate our theoretical results, we conduct experiments using gradient-based methods on real neural networks and track changes in the topological structure of the neuron-induced point cloud. Our experimental results include both direct visualizations of the topological structure in low-dimensional networks and quantitative measurements that capture topological properties in networks trained on standard tasks. The experiments are conducted under a variety of optimizers and settings. Additional results and detailed experimental settings are in \Cref{sec:extra-experiments,sec:experiment-details}.

\paragraph{Low-dimensional Distributions.}

We first train neural networks with low-dimensional neurons ($2$- or $3$-dimensional) in order to directly visualize their topology. Specifically, consider a two-layer neural network $F: \mathbb R^d \to \mathbb R$ with hidden layer size $h$, defined as
\begin{align}
F\left({\boldsymbol{z}}; \left\{({\boldsymbol{w}}_i,a_i)\right\}_{i=1}^h\right) = \sum_{i=1}^h a_i\sigma\left( \left<{\boldsymbol{w}}_i, {\boldsymbol{z}}\right> \right),
\end{align}
where ${\boldsymbol{w}}_i \in \mathbb R^d$ and $a_i \in \mathbb R$ are learnable parameters, and $\sigma$ denotes the sigmoid function. The network is trained on data generated by a random teacher network (See \Cref{sec:experiment-details} for details). In this setting, the loss function has the permutation symmetry as described in \eqref{eq:loss-symmetry}, with $I = [h]$ and $\mathcal X = \left\{\left({\boldsymbol{w}}_i, a_i\right)\right\}_{i \in I} \in (\mathbb R^{d+1})^I$. 

For visualization purposes, we focus on $d = 1$ and $d = 2$, so that each element in $\mathcal X$ lies in $\mathbb R^2$ or $\mathbb R^3$ (referred to below as 2D and 3D networks, respectively), which enables a straightforward visualization of their topology. Moreover, we initialize the elements in $\mathcal X$ with specific topological structures to highlight potential topological (in)variance. See \Cref{fig:exp-2d,fig:exp-2d-alt,fig:exp-3d} for the results with GD, and \Cref{sec:extra-experiments} for extra results with other optimizers. %

\begin{wrapfigure}{r}{0.45\linewidth}
\centering\includegraphics[width=\linewidth]{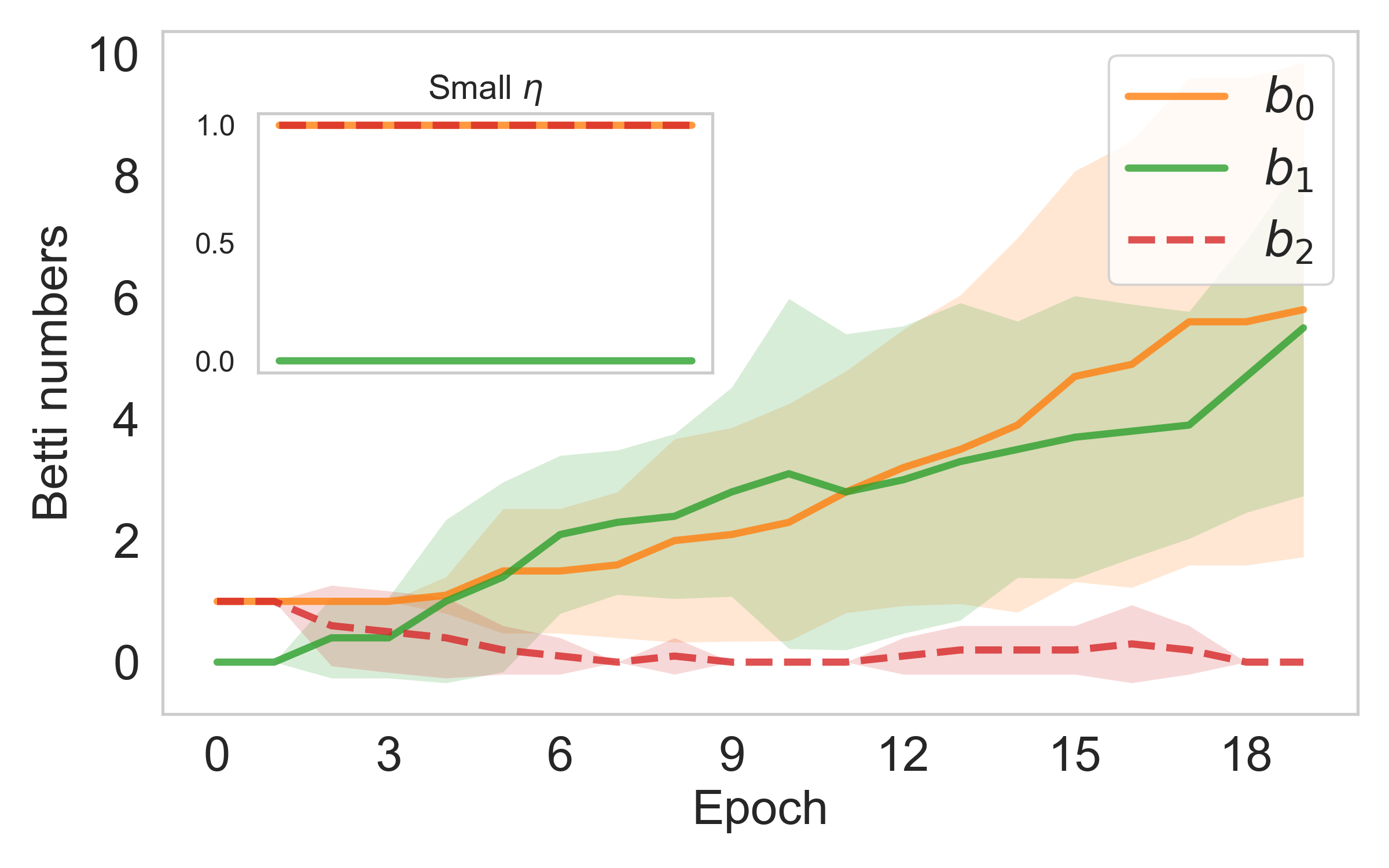}
    \caption{ \textbf{Evolution of Betti numbers during training with GD.} The main panel shows results for the large learning rate, while the inset shows results for the small one. Each curve is obtained by averaging over 10 runs; the shaded regions indicate the standard deviations.}\label{fig:betti}
    \vspace{-1em}
\end{wrapfigure}

\paragraph{Topological Invariants.}
Now, we directly measure topological invariants of high-dimensional models trained on real tasks. Specifically, we measure the first three Betti numbers $b_0,b_1,b_2$ of the point cloud formed by the neurons. Betti numbers are fundamental topological invariants that count the number of connected components, loops, and higher-dimensional voids in a topological space; they are widely used in topological data analysis as compact descriptors of shape and structure \citep{comp-topology,topology-of-neurons}.

We train a two-layer MLP on the MNIST dataset \citep{mnist} for a classification task using standard cross-entropy loss, and track the evolution of Betti numbers. The network is initialized with neurons uniformly sampled from the surface of a 3D unit sphere, which has Betti numbers $(b_0,b_1,b_2) = (1,0,1)$. \Cref{fig:betti} shows the results with both GD and Adam. These results match our theoretical predictions: with small learning rates, the model learns without changing the topological structure of the neurons, while with large learning rates, the topology can change. Importantly, in all cases the model can achieve a significant test accuracy, ruling out the possibility that with small step sizes the model simply stays near initialization without meaningful updates.

\section{Discussion}\label{sec:discussion}

We have investigated the interplay between permutation symmetries, learning rates, and neuron topology in the training dynamics of neural networks, leading to a universal conceptual message: permutation symmetry of architecture modules or learning algorithms imposes strong topological constraints on how learning could happen. These interactions yield a range of insights that shed light on understanding important empirical phenomena and inspire future algorithm design. A limitation of our work is that it is entirely theoretical and does not test the predictions on large-scale experiments. Due to the scope limit, we have also only discussed a small subset of all possible implications of a topological theory of deep learning.

\paragraph{Topology.} Our results establish a simple and clear topological characterization of learning, and clarify a crucial distinction between training with different learning rates: small learning rates preserve the topological structure of the neuron manifold, whereas large learning rates may enable topological changes. This might provide new insights into learning rate scheduling strategies, such as learning rate decay: starting with a relatively large learning rate may facilitate exploration across different topological configurations, while subsequent decay to smaller values can stabilize the training dynamics within a fixed topology. Our theory also offers a structural viewpoint that complements existing explanations, such as the ``catapult mechanism'' \citep{lr-phase-trainsition-1}. While further work is needed to establish the precise conditions under which such topological transitions occur in practical settings, this perspective highlights a potentially useful link between learning-rate phases and topological dynamics.

\paragraph{Phase Transition.} From a physics perspective, the change in the topology directly corresponds to phase transitions. For example, a material with different Chern numbers is in different phases. In our setting, these topological phase transitions also directly correspond to changes in the symmetry of the parameters and are thus also phase transitions of the Landau type. Specifically, changing from a genus-1 topology to a genus-2 topology implies that two neurons have ``merged" into one neuron, and this corresponds to a symmetry-restoration process where the network changes from the symmetry-broken state to the symmetric state \citep{ziyin2024symmetry}.

\paragraph{Deep Learning Theory.} Our result also highlights the limitations of conventional theories of learning dynamics. The EOS phenomenon states that GD almost always leads to a solution whose sharpness is $2/\eta$, and in practice this can happen quite early on in the training. Our result thus suggests a huge difference between dominant theories of learning dynamics such as NTK and mean-field theories, and the actual learning dynamics that we observe in practice. The topological breakdown implies that the theories built for a smaller learning rate cannot approximate what happens above that critical point, and it remains an open problem of how to describe the learning processes in the topological breakdown regime.

\section*{Acknowledgment}
The authors thank Hong-Yi Wang for discussion. ILC acknowledges support in part from the Institute for Artificial Intelligence and Fundamental Interactions (IAIFI) through NSF Grant No. PHY-2019786. This work was also supported by the Center for Brains, Minds and Machines (CBMM), funded by NSF STC award  CCF - 1231216.

\bibliography{ref,ref2}
\bibliographystyle{iclr2026_conference}

\newpage
\appendix

\section{Proofs of Theoretical Results}

In this section, we prove all the theoretical results in the main paper. Before starting, we first define some additional notation. For a collection, we use subscripts to denote its elements. For example, if $\mathcal X = \left\{{\boldsymbol{x}}_i\right\}_{i \in I} \in \left(\mathbb R^{D}\right)^I$ is a collection of $D$-dimensional vectors, then $\mathcal X_i$ represents ${\boldsymbol{x}}_i$ by default.

For an operator $P$ on $I$, and a subset $J \subseteq I$, we use $P_J$ to denote the operator obtained by constraining $P$ on $J$. 

For a statement $\psi$, we use $\mathbbm 1_{\{\psi\}}$ to represent its indicator, i.e. $\mathbbm 1_{\{\psi\}} = \begin{cases}1 & \text{$\psi$ is true} \\ 0 & \text{otherwise}\end{cases}$.

\subsection{Proof of \Cref{lem:no-splitting}}\label{sec:proof-of-no-splitting}

Let $\mathcal X = \mathcal X^{(t)}$ for convenience. Define $P : I \to I$ as switching $i$ and $j$: \begin{align}\forall k \in I, P(k) = \begin{cases}j & k = i \\ i & k = j \\ k & \text{otherwise}\end{cases}.\label{eq:a-switching-P}\end{align}
Obviously $P \in \mathsf{FSym}(I)$. Since ${\boldsymbol{x}}_i^{(t)} = {\boldsymbol{x}}_j^{(t)}$, we have $\mathcal X = P\mathcal X$.

Applying $\mathcal X = P\mathcal X$ and the equivariance property, we can obtain that \begin{align}
{\boldsymbol{x}}^{(t+1)}_i  = {\boldsymbol{x}}_i^{(t)} + \eta U_i (\mathcal X)
  = {\boldsymbol{x}}_j^{(t)} + \eta U_i(P\mathcal X)
  = {\boldsymbol{x}}_j^{(t)} + \eta (P U(\mathcal X))_i
  = {\boldsymbol{x}}_j^{(t)} + \eta  U_j(\mathcal X)
  = {\boldsymbol{x}}_j^{(t+1)},
\end{align}
which proves the proposition.

\subsection{Proof of \Cref{lem:no-merging}}\label{sec:proof-of-no-merging}

We first prove a lemma showing that two neurons that are close must remain close. 
\begin{lemma}[No Splitting]\label{lem:no-crossing}The following statement holds when $U$ has the equivariance property and $K$-continuity property. For any $t\in \mathbb N$ and $i,j \in I$ such that $i \neq j$, we have \begin{align}
\left\|  U_i(\mathcal X^{(t)}) -  U_j(\mathcal X^{(t)})\right\| \leq K \left\| {\boldsymbol{x}}_i^{(t)} - {\boldsymbol{x}}_j^{(t)}  \right\|.
\end{align}
\end{lemma}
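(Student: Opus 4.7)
The plan is to exploit the transposition that swaps the two indices of interest and turn the permutation equivariance into a comparison between $U(\mathcal X^{(t)})$ and $U$ evaluated at a collection that differs from $\mathcal X^{(t)}$ only at the two coordinates $i,j$. Since that comparison differs from $\mathcal X^{(t)}$ in exactly two entries, the norm $\|\cdot\|$ is finite and P2-$K$ applies directly.

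Concretely, I would first fix $P\in\mathsf{FSym}(I)$ to be the transposition of $i$ and $j$, exactly as in \eqref{eq:a-switching-P} of the proof of \Cref{lem:no-splitting}. Writing $\mathcal X=\mathcal X^{(t)}$, the collection $P\mathcal X$ agrees with $\mathcal X$ everywhere except at indices $i$ and $j$, where the entries are swapped. Hence $P\mathcal X-\mathcal X$ is supported on $\{i,j\}$, with $(P\mathcal X-\mathcal X)_i={\boldsymbol{x}}_j-{\boldsymbol{x}}_i$ and $(P\mathcal X-\mathcal X)_j={\boldsymbol{x}}_i-{\boldsymbol{x}}_j$, so
\begin{align}
\|P\mathcal X-\mathcal X\|=\sqrt{2}\,\bigl\|{\boldsymbol{x}}_i^{(t)}-{\boldsymbol{x}}_j^{(t)}\bigr\|.
\end{align}
Applying P2-$K$ (which is legitimate because the two collections differ in only finitely many entries) yields
\begin{align}
\bigl\|U^{(t)}(P\mathcal X)-U^{(t)}(\mathcal X)\bigr\|\le K\sqrt{2}\,\bigl\|{\boldsymbol{x}}_i^{(t)}-{\boldsymbol{x}}_j^{(t)}\bigr\|.
\end{align}

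Next I would bring equivariance into play: by P1, $U^{(t)}(P\mathcal X)=P\,U^{(t)}(\mathcal X)$, so the collection $U^{(t)}(P\mathcal X)-U^{(t)}(\mathcal X)$ is also supported on $\{i,j\}$, taking the values $U_j^{(t)}(\mathcal X)-U_i^{(t)}(\mathcal X)$ at index $i$ and $U_i^{(t)}(\mathcal X)-U_j^{(t)}(\mathcal X)$ at index $j$. Therefore
\begin{align}
\bigl\|U^{(t)}(P\mathcal X)-U^{(t)}(\mathcal X)\bigr\|=\sqrt{2}\,\bigl\|U_i^{(t)}(\mathcal X)-U_j^{(t)}(\mathcal X)\bigr\|,
\end{align}
and combining the two displays and cancelling the factor $\sqrt{2}$ gives the claimed bound.

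I expect no significant obstacle; the argument is a two-line symmetry-and-continuity calculation. The only point requiring care is confirming that $\|\cdot\|$ is defined on the two pairs involved, i.e.\ that $P\mathcal X$ differs from $\mathcal X$ in only finitely many coordinates (guaranteed because $P\in\mathsf{FSym}(I)$ has finite support) and that the image collections also differ at only those indices (which is exactly what the equivariance identity $U^{(t)}(P\mathcal X)=P\,U^{(t)}(\mathcal X)$ delivers, since $P$ acts as the identity outside $\{i,j\}$). Once these finite-support checks are in place, the inequality is immediate.
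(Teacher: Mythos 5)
Your argument is correct and follows essentially the same route as the paper: transpose $i$ and $j$, use P1 to reduce to comparing $U(\mathcal X)$ with $U(P\mathcal X)$, and then invoke P2-$K$ with $\|\mathcal X - P\mathcal X\| = \sqrt{2}\|{\boldsymbol{x}}_i - {\boldsymbol{x}}_j\|$. The only (cosmetic) difference is that you observe equivariance forces $U(P\mathcal X)-U(\mathcal X)$ to be supported exactly on $\{i,j\}$, giving $\|U(P\mathcal X)-U(\mathcal X)\| = \sqrt{2}\|U_i - U_j\|$ as an equality, whereas the paper writes this step as an inequality $\sqrt{2}\|U_i-U_j\| \le \|U(\mathcal X)-U(P\mathcal X)\|$; both yield the stated bound.
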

\begin{proof}
    Let $\mathcal X = \mathcal X^{(t)}$ for convenience. Define $P \in \mathsf{FSym}(I)$ as switching $i$ and $j$ (as in \eqref{eq:a-switching-P}). Then using the equivariance property, we have \begin{align}
      U_i(\mathcal X) -  U_j(\mathcal X) & = U_i(\mathcal X) -  \left( PU(\mathcal X) \right)_i
     \\ & = U_i(\mathcal X) -  U_i(P \mathcal X).
    \end{align}
    Notice that $\mathcal X$ and $P \mathcal X$ only differ in entries $i$ and $j$. Using the $K$-continuity property, we have \begin{align}
    \sqrt{2} \left\|U_i(\mathcal X) -  U_j(\mathcal X)\right\| & = \sqrt{\left\|U_i(\mathcal X) -  U_j(\mathcal X)\right\|^2 + \left\|U_i(\mathcal X) -  U_j(\mathcal X)\right\|^2}
    \\ & = \sqrt{\left\| U_i(\mathcal X) -  U_i(P \mathcal X) \right\|^2 + \left\| U_j(\mathcal X) -  U_j(P \mathcal X) \right\|^2}
    \\ & \leq \left\|U(\mathcal X) - U(P \mathcal X)\right\|
    \\ & \leq K \left\|\mathcal X- P\mathcal X\right\|
    \\ & = \sqrt{2} K\left\| {\boldsymbol{x}}_i^{(t)} - {\boldsymbol{x}}_j^{(t)} \right\|.
    \end{align}
    The proposition is thus proved by shifting the terms.
\end{proof}

Next, we prove \Cref{lem:no-merging} using \Cref{lem:no-crossing}.
\paragraph{\textit{Proof of \Cref{lem:no-merging}}}

Notice that \begin{align}
    \left\| {\boldsymbol{x}}^{(t+1)}_i - {\boldsymbol{x}}^{(t+1)}_j \right\| & = \left\|{\boldsymbol{x}}_i^{(t)} - {\boldsymbol{x}}_j^{(t)} + \eta\left(U_i (\mathcal X^{(t)}) - U_j(\mathcal X^{(t)})\right)\right\|
    \\ & \in \left\|{\boldsymbol{x}}_i^{(t)}   - {\boldsymbol{x}}_j^{(t)} \right\| + \left[ - 
    \eta , + 
    \eta  \right] \times \left\| U_i(\mathcal X^{(t)}) - U_j(\mathcal X^{(t)})\right\|.
\end{align}
The proposition is thus directly proved by applying \Cref{lem:no-crossing}.
\qed

\subsection{Proof of \Cref{lem:learning-rule-bijection}}\label{sec:proof-of-bijection}

For simplicity in the proof we fix a time $t$ and denote $\widehat U^{(t)}$ by $f$. From \Cref{lem:no-splitting}, we have if ${\boldsymbol{x}}_i^{(t)} = {\boldsymbol{x}}_j^{(t)}$ then ${\boldsymbol{x}}_i^{(t + 1)} = {\boldsymbol{x}}_j^{(t + 1)}$, therefore $f$ is well-defined. Moreover, from the definition of $S^{(t)}$ and $S^{(t+1)}$, it is obvious that $f$ is a surjection.

Now suppose $\eta K < 1$. If ${\boldsymbol{x}}_i^{(t)} \neq {\boldsymbol{x}}_j^{(t)}$, the left-hand-side of \Cref{lem:no-merging} and the condition that $\eta K < 1$ together shows that ${\boldsymbol{x}}_i^{(t + 1)} \neq {\boldsymbol{x}}_j^{(t + 1)}$, following from which we have $f$ is an injection. Therefore, $f$ is a bijection.

\subsection{Proof of \Cref{thm:main}}\label{sec:proof-of-main-thm}

We fix a time point $t$ and denote $\widehat U^{(t)}$ by $f$. \Cref{lem:learning-rule-bijection} has already proved that $f$ is a surjection. In this proof, we first prove the topological properties (i., ii. and iii.), and then the differentiable manifold property (iv.).

\paragraph{Topological properties.}

For any pair of two different points ${\boldsymbol{x}}_i^{(t)}$ and ${\boldsymbol{x}}_j^{(t)}$, from the right-hand-side of \Cref{lem:no-merging}, we have \begin{align}
\left\| f\left({\boldsymbol{x}}_i^{(t)}\right) - f\left({\boldsymbol{x}}_j^{(t)}\right)\right\| \leq (1+\eta K) \left\| {\boldsymbol{x}}_i^{(t)} - {\boldsymbol{x}}_j^{(t)} \right\|,
\end{align}
which shows that $f$ is $(1+\eta K)$-Lipschitz continuous. Since all Lipschitz continuous functions are continuous, we have $f$ is also continuous.

If, additionally, $S^{(t)}$ is compact, then from the continuity of $f$ we immediately know $S^{(t+1)} = f\left(S^{(t)}
\right)$ is compact. Moreover, since $S^{(t+1)}$ is a metric space, it is automatically Hausdorff, and it is known that a surjective mapping from a compact space to a Hausdorff space is a quotient map.

Now, suppose $\eta K < 1$ (without the compactness of $S^{(t)}$). \Cref{lem:learning-rule-bijection} has proved that $f$ is a bijection. Consider the inversion of $f$. Let $g = f^{-1}$. It is obvious that for any $i \in I$, we have $g\left({\boldsymbol{x}}_i^{(t+1)}\right) = {\boldsymbol{x}}_i^{(t)}$. Using the left-hand-side of \Cref{lem:no-merging}, we have \begin{align}
\left\| g\left({\boldsymbol{x}}_{i}^{(t+1)}\right) - g\left({\boldsymbol{x}}_{j}^{(t+1)}\right) \right\| \leq \frac{1}{1 - \eta K} \left\| {\boldsymbol{x}}_{i}^{(t+1)} - {\boldsymbol{x}}_{j}^{(t+1)} \right\|
\end{align}
for any $i,j \in I$, and therefore $g = f^{-1}$ is also continuous. This proves that $f$ is a homeomorphism.

\paragraph{Differentiable manifold properties.} Now, with the condition that $\eta K < 1$ and $U^{(t)}$ satisfies the smoothness property (P3), we prove that $f$ is a diffeomorphism from $S^{(t)}$ to $S^{(t+1)}$. The Invariance of Domain Theorem (See e.g. Theorem 2B.3 in \cite{algebraic_topology}) guarantees that $S^{(t+1)}$ is also an open set in $\mathbb R^D$. Therefore, we only need to prove that $f$ and its inverse both have continuous derivatives. 

    Fix a point ${\boldsymbol{x}}_i^{(t)} \in S^{(t)}$. Since $S^{(t)}$ is open, there must be a scalar $r_i > 0$, such that for any ${\boldsymbol{\Delta}} \in \mathbb R^d$ with $\|{\boldsymbol{\Delta}}\| \leq r_i$, we have ${\boldsymbol{x}}_i^{(t)} + {\boldsymbol{\Delta}} \in S^{(t)}$. Consider such a perturbation ${\boldsymbol{\Delta}}$, then there must be a $j \in I$ such that ${\boldsymbol{x}}_j^{(t)} = {\boldsymbol{x}}_i^{(t)} + {\boldsymbol{\Delta}}$. Let $P \in \mathsf{FSym}(I)$ be the permutation operator that exchanges $i$ and $j$ (as defined in \eqref{eq:a-switching-P}). We have \begin{align}
    f({\boldsymbol{x}}_i^{(t)} + {\boldsymbol{\Delta}}) & = f\left({\boldsymbol{x}}_j^{(t)}\right)
    \\ & = U_j\left(\mathcal X^{(t)}\right)
    \\ & = \left (PU\left(\mathcal X^{(t)}\right) \right)_i
    \\ & = U_i\left(P \mathcal X^{(t)}\right) & \text{(Equivariance Property)}
    \\ & = U_i\left[\mathcal X^{(t)} + \left({\boldsymbol{e}}_i - {\boldsymbol{e}}_j\right)\left({\boldsymbol{x}}_j^{(t)} - {\boldsymbol{x}}_i^{(t)}\right)\right]
    \\ & = g_{i}^{(t)}\left({\boldsymbol{\Delta}}, {\boldsymbol{x}}_j^{(t)}\right) 
    \\ & = g_{i}^{(t)}\left({\boldsymbol{\Delta}}, {\boldsymbol{x}}_i^{(t)} + {\boldsymbol{\Delta}}\right) 
    \end{align}
    Since from P3 we know $g_{i}^{(t)}$ is $C^1$ with respect to its two parameters, from the chain rule we have $g_{i}^{(t)}\left({\boldsymbol{\Delta}}, {\boldsymbol{x}}_i^{(t)} + {\boldsymbol{\Delta}}\right) $ is also $C^1$ with respect of ${\boldsymbol{\Delta}}$, and therefore $f$ is also $C^1$ at point ${\boldsymbol{x}}_i^{(i)}$. Since $i$ is arbitrarily chosen, $f$ is thus $C^1$ on entire $S^{(t)}$.

    Next, we prove that $f^{-1}$ is also $C^1$. Again consider ${\boldsymbol{x}}_i^{(t)} \in S^{(t)}$. Since we already know $f$ is $C^1$, let its gradient at point ${\boldsymbol{x}}_i^{(t)}$ be ${\boldsymbol{G}}$ and we have for any unit vector ${\boldsymbol{v}}$, the directional derivative satisfies \begin{align}
    {\boldsymbol{G}}{\boldsymbol{v}} = \lim_{\delta \to 0 \atop \delta \neq 0} \frac{f\left({\boldsymbol{x}}_i^{(t)} + \delta {\boldsymbol{v}}\right) - f\left({\boldsymbol{x}}_i^{(t)}\right)}{\delta}. \label{eq:directional-deriv}
    \end{align}
    Let $\alpha = 1 - \eta K, \beta = 1+\eta K$. From \Cref{lem:no-merging}, for any $\delta < r_i$ we have  \begin{align}
    \alpha \leq \frac{\left\| f\left({\boldsymbol{x}}_i^{(t)} + \delta {\boldsymbol{v}}\right) - f\left({\boldsymbol{x}}_i^{(t)}\right) \right\|}{|\delta|}  \leq \beta.
    \end{align}
    Subtracting the bounds into \eqref{eq:directional-deriv}, we get \begin{align}
    \left\|{\boldsymbol{G}} {\boldsymbol{v}}\right\| \in [\alpha ,\beta],
    \end{align}
    which further implies that all singular-values of ${\boldsymbol{G}}$ are in $[\alpha, \beta]$, which means ${\boldsymbol{G}}$ is invertible. Since $\widehat U$ is $C^1$, inverse function theorem therefore shows $f^{-1}$ is also $C^1$.     
    
\subsection{Proof of \Cref{thm:measure-preserving}}

We fix a time point $t$ and denote $\widehat U^{(t)}$ by $f$. \Cref{lem:learning-rule-bijection} has already proved that $f$ is a bijection. For any open set $A \subseteq S^{(t+1)}$, we have \begin{align}
\mu^{(t+1)}(A) & = \mu^{(t+1)}\left\{ \left.{\boldsymbol{x}}^{(t+1)}_i \right| i \in I , {\boldsymbol{x}}_i^{(t+1)} \in A\right\}
\\ & = m\left\{ i \in I \left.| {\boldsymbol{x}}_i^{(t+1)} \in A\right.\right\}
\\ & = m \left\{ i \in \left| f\left({\boldsymbol{x}}_i^{(t)}\right) \in A \right.\right\}
\\ & = m \left\{ i \in \left| {\boldsymbol{x}}_i^{(t)} \in f^{-1}( A )\right.\right\}
\\ & = \mu^{(t)}(f^{-1}(A)).
\end{align}
This proves that $f$ is measure-preserving. Following the same process one can easily prove that $f^{-1}$ is also measure-preserving.

\subsection{Proof of \Cref{lem:symmetry-implies-equivariance}}\label{sec:proof-of-sym-imples-eq}
    In this proof we prove a slightly stronger version of the proposition originally stated in \Cref{lem:symmetry-implies-equivariance}, without using the condition that $I$ is finite. The result for finite $I$ is thus a direct corollary.

    We only need to prove that for any $\mathcal X \in \left(\mathbb R^D\right)^I$ and $P \in \mathsf{FSym}(I)$, we have \begin{align}
    P\nabla L(\mathcal X) = \nabla L(P \mathcal X).\label{eq:gd-loss-symmetric}
    \end{align}
    Now we fix $P \in \mathsf{FSym}(I)$ and consider any $\mathcal X \in \left(\mathbb R^D\right)^I$.  Let \begin{align}
    J = \{i \in I | Pi \neq i\},
    \end{align} be the support set of $P$. Since $P$ is finitary, $J$  is a finite set. Therefore, we only need to prove the proposition of entries in $J$. define $L_J: \left(\mathbb R^D\right)^J \to \mathbb R$ such that \begin{align}
    \forall \mathcal Y = \left\{{\boldsymbol{y}}_j \right\}_{j \in J}, L_J(\mathcal Y) = L\left(\left\{\mathbbm 1_{\{i \in J\}}{\boldsymbol{y}}_i + \mathbbm 1_{\{i \not \in J\}} {\boldsymbol{x}}_i\right\}_{i\in I}\right).
    \end{align}
    
    The symmetry gives us \begin{align}
    \forall \mathcal Y \in \left(\mathbb R^D\right)^J, L_J(\mathcal Y) = L_J(P|_J \mathcal Y),
    \end{align}
    where $P_J = P|_J$ is restriction of $P$ on $J$. Taking derivative of both sides gives \begin{align}
    \nabla L_J(\mathcal Y) = P^\top_J \nabla L_J(P_J \mathcal Y),
    \end{align}
    shifting the terms and \eqref{eq:gd-loss-symmetric} is proved.

\subsection{Proof of \Cref{lem:continuous-of-gd}}

The proposition is directly proved by noticing that \begin{align}
\left\| U^{(t)}(\mathcal X) - U^{(t)}(\mathcal Y) \right\| = \left\| \nabla L(\mathcal Y) - \nabla L(\mathcal X)  \right\| \leq K \|\mathcal Y - \mathcal X\|.
\end{align}

\subsection{Proof of \Cref{prop:adam-symmetry}}\label{sec:proof-of-adam-symm}

We use \eqref{eq:gd-loss-symmetric} proved before. Let $P \in \mathsf{FSym}(I)$. We have \begin{align}
U^{(t)}\left[  \left\{\begin{pmatrix} {\boldsymbol{\theta}}_{P(i)}\\ {\boldsymbol{m}}_{P(i)}\\  {\boldsymbol{v}}_{P(i)}\end{pmatrix}\right\}_{i \in I}\right] &  = \left\{\begin{pmatrix} - \frac{{\boldsymbol{ m}}_{P(i)} / (1 - \beta_1^t)}{\epsilon + \sqrt{P{\boldsymbol{v}}_{P(i)} / (1-\beta_2^t)}} \\ \frac{1-\beta_1}{\eta} \left[ \nabla_{i} L(P \Theta) - {\boldsymbol{m}}_{P(i)}  
\right]
\\ \frac{1-\beta_2}{\eta} \left[\nabla_{i} L(P \Theta)^2 -  {\boldsymbol{v}} _{P(i)}
\right]
\end{pmatrix}\right\}_{i \in I}
\\ & = \left\{\begin{pmatrix} - \frac{{\boldsymbol{ m}}_{P(i)} / (1 - \beta_1^t)}{\epsilon + \sqrt{P{\boldsymbol{v}}_{P(i)} / (1-\beta_2^t)}} \\ \frac{1-\beta_1}{\eta} \left[ \nabla_{P(i)} L( \Theta) - {\boldsymbol{m}}_{P(i)}  
\right]
\\ \frac{1-\beta_2}{\eta} \left[\nabla_{P(i)} L( \Theta)^2 -  {\boldsymbol{v}} _{P(i)}
\right]
\end{pmatrix}\right\}_{i \in I}
\\ & = P \left\{\begin{pmatrix} - \frac{{\boldsymbol{ m}}_{i} / (1 - \beta_1^t)}{\epsilon + \sqrt{P{\boldsymbol{v}}_{i} / (1-\beta_2^t)}} \\ \frac{1-\beta_1}{\eta} \left[ \nabla_{i} L( \Theta) - {\boldsymbol{m}}_{i}  
\right]
\\ \frac{1-\beta_2}{\eta} \left[\nabla_{i} L( \Theta)^2 -  {\boldsymbol{v}} _{i}
\right]
\end{pmatrix}\right\}_{i \in I}
\\ & = P U^{(t)}\left[  \left\{\begin{pmatrix} {\boldsymbol{\theta}}_{i}\\ {\boldsymbol{m}}_{i}\\  {\boldsymbol{v}}_{i}\end{pmatrix}\right\}_{i \in I}\right] .
\end{align}
    
\section{Extra Experiment Results}\label{sec:extra-experiments}

In this section, we provide extra experiment results performed with more optimizers, complementing those in \Cref{sec:experiments}. 

\paragraph{Low Dimensional Neural Networks}

We extend the low-dimensional experiments to additional optimizers. \Cref{fig:exp-2d-alt} presents addition result for the 2D network trained with GD, under a different initialization. \Cref{fig:exp-2d-adam,fig:exp-3d-adam} present the results for 2D and 3D networks trained with Adam, and
\Cref{fig:exp-2d-momentum,fig:exp-3d-momentum} present the corresponding results with momentum gradient descent.

\begin{figure}[ht]
    \centering
    \begin{minipage}{0.32\linewidth}
    \centering
    \includegraphics[width=\linewidth]{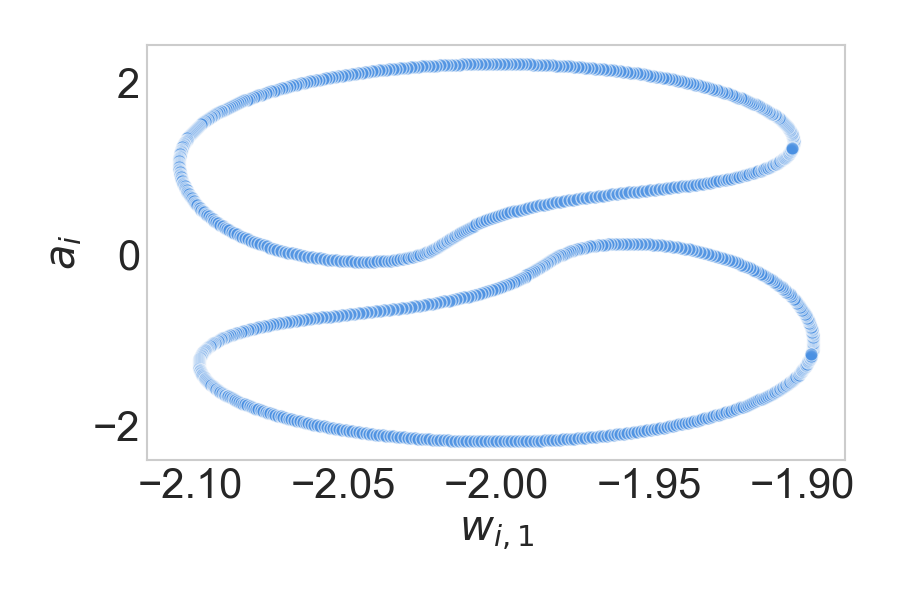}
    (a) Initialization
    \end{minipage}
    \hfill
    \begin{minipage}{0.32\linewidth}
    \centering
    \includegraphics[width=\linewidth]{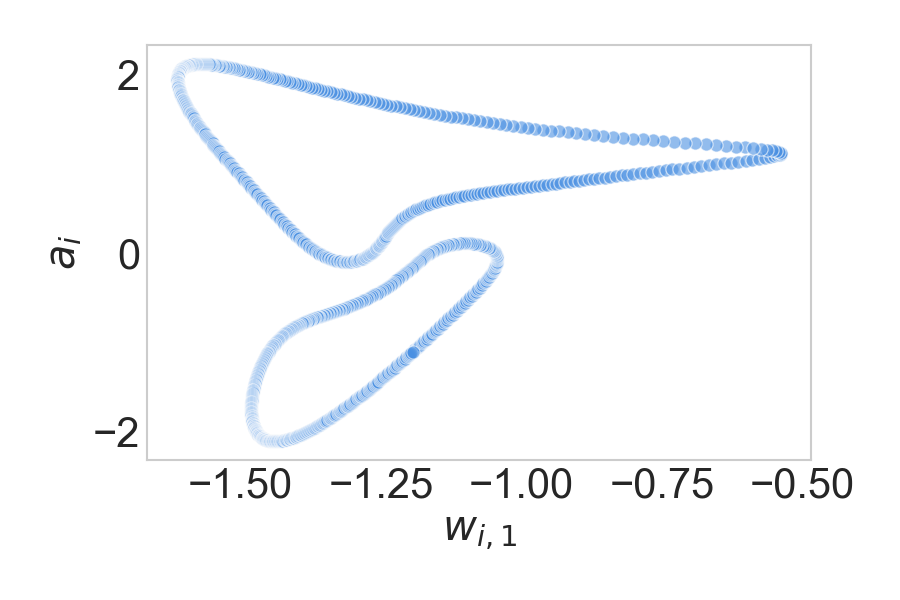}
    (b) End of training, small $\eta$
    \end{minipage}
    \hfill
    \begin{minipage}{0.32\linewidth}
    \centering
    \includegraphics[width=\linewidth]{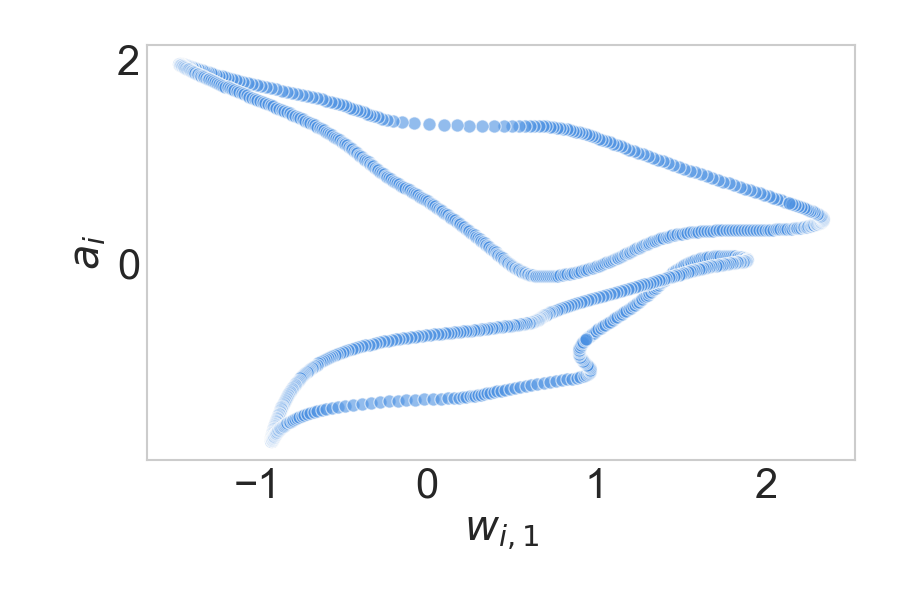}
    (c) End of training, large $\eta$
    \end{minipage}
    \caption{\textbf{Topology of a 2D neural network with GD and disjoint genus-1 initialization.} 
The neurons are initialized on the disjoint union of two genus-1 surfaces and optimized with GD.}
    \label{fig:exp-2d-alt}
\end{figure}

\begin{figure}[ht]
    \centering
    \begin{minipage}{0.32\linewidth}
    \centering
    \includegraphics[width=\linewidth]{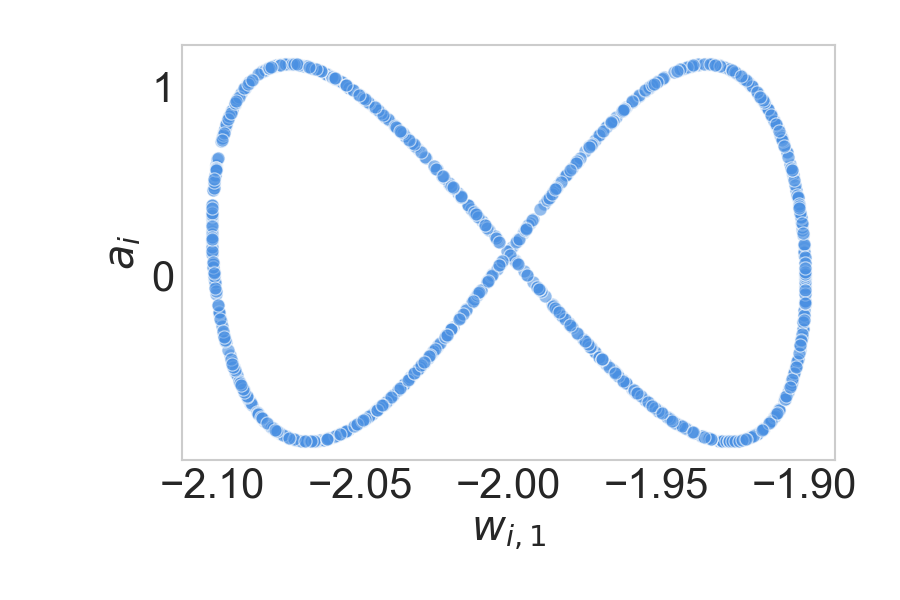}
    (a)
    \end{minipage}
    \hfill
    \begin{minipage}{0.32\linewidth}
    \centering
    \includegraphics[width=\linewidth]{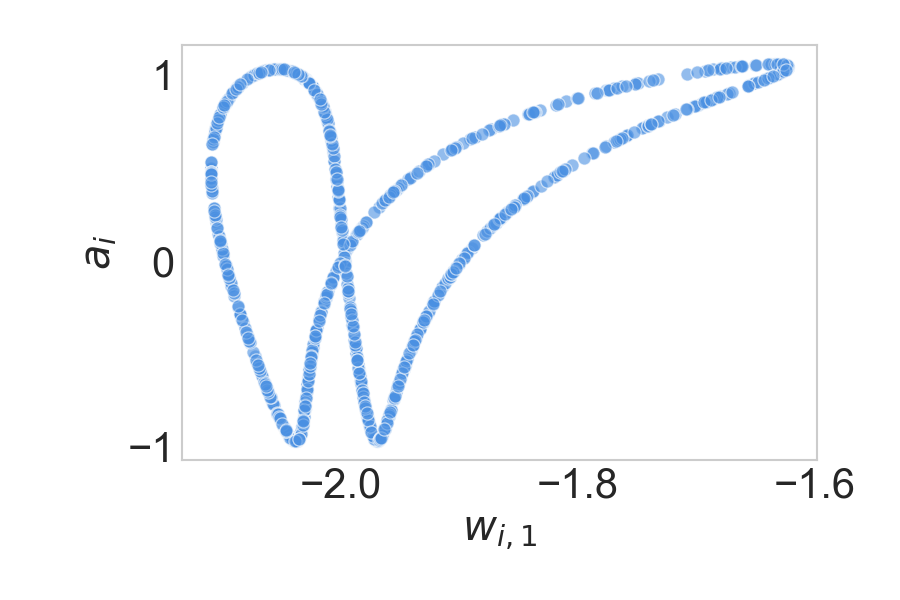}
    (b)
    \end{minipage}
    \hfill
    \begin{minipage}{0.32\linewidth}
    \centering
    \includegraphics[width=\linewidth]{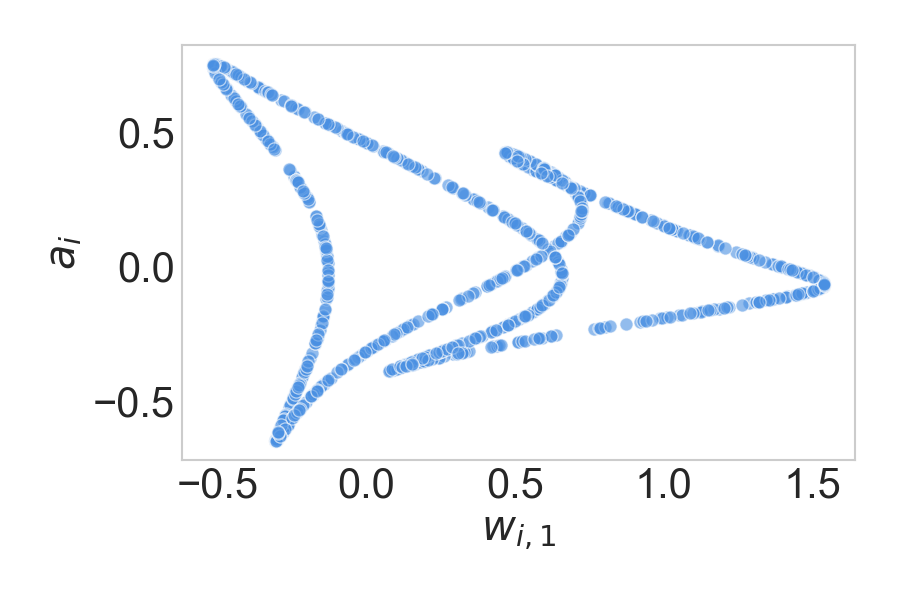}
    (c)
    \end{minipage}
    \caption{\textbf{Topology of a 2D neural network with momentum gradient descent.} 
The neurons are initialized on a genus-2 surface and optimized with momentum GD.}
    \label{fig:exp-2d-momentum}
\end{figure}

\begin{figure}[ht]
    \centering
    \begin{minipage}{0.32\linewidth}
    \centering
    \includegraphics[width=\linewidth]{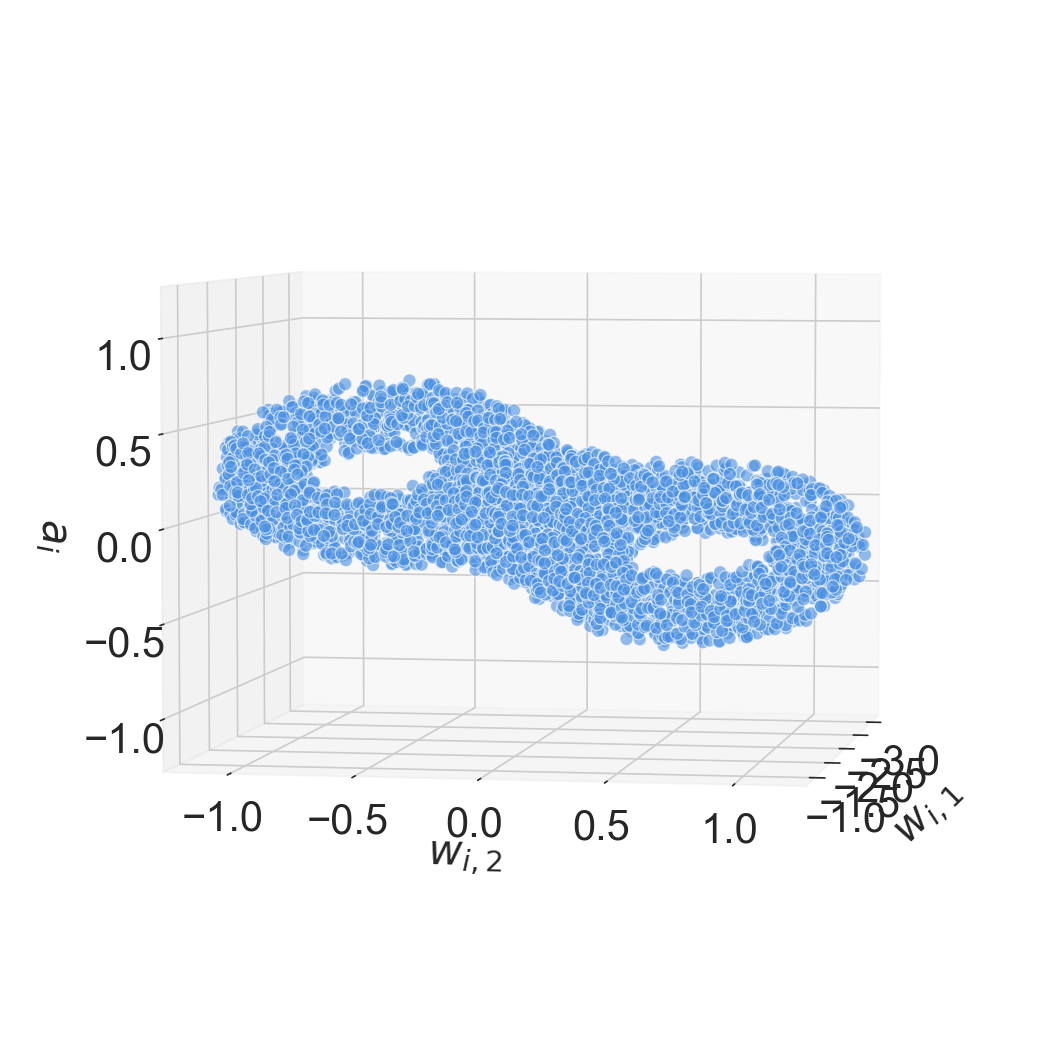}
    (a)
    \end{minipage}
    \hfill
    \begin{minipage}{0.32\linewidth}
    \centering
    \includegraphics[width=\linewidth]{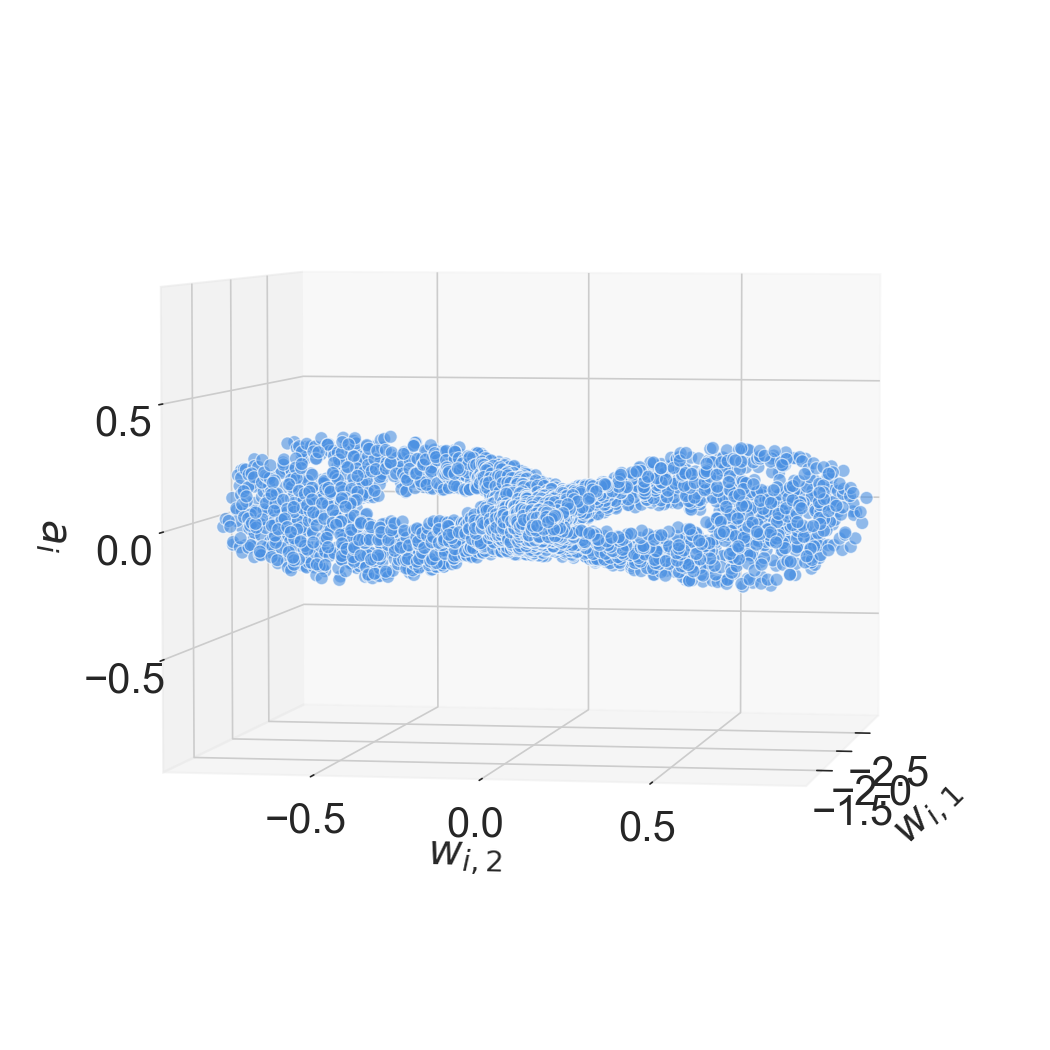}
    (b)
    \end{minipage}
    \hfill
    \begin{minipage}{0.32\linewidth}
    \centering
    \includegraphics[width=\linewidth]{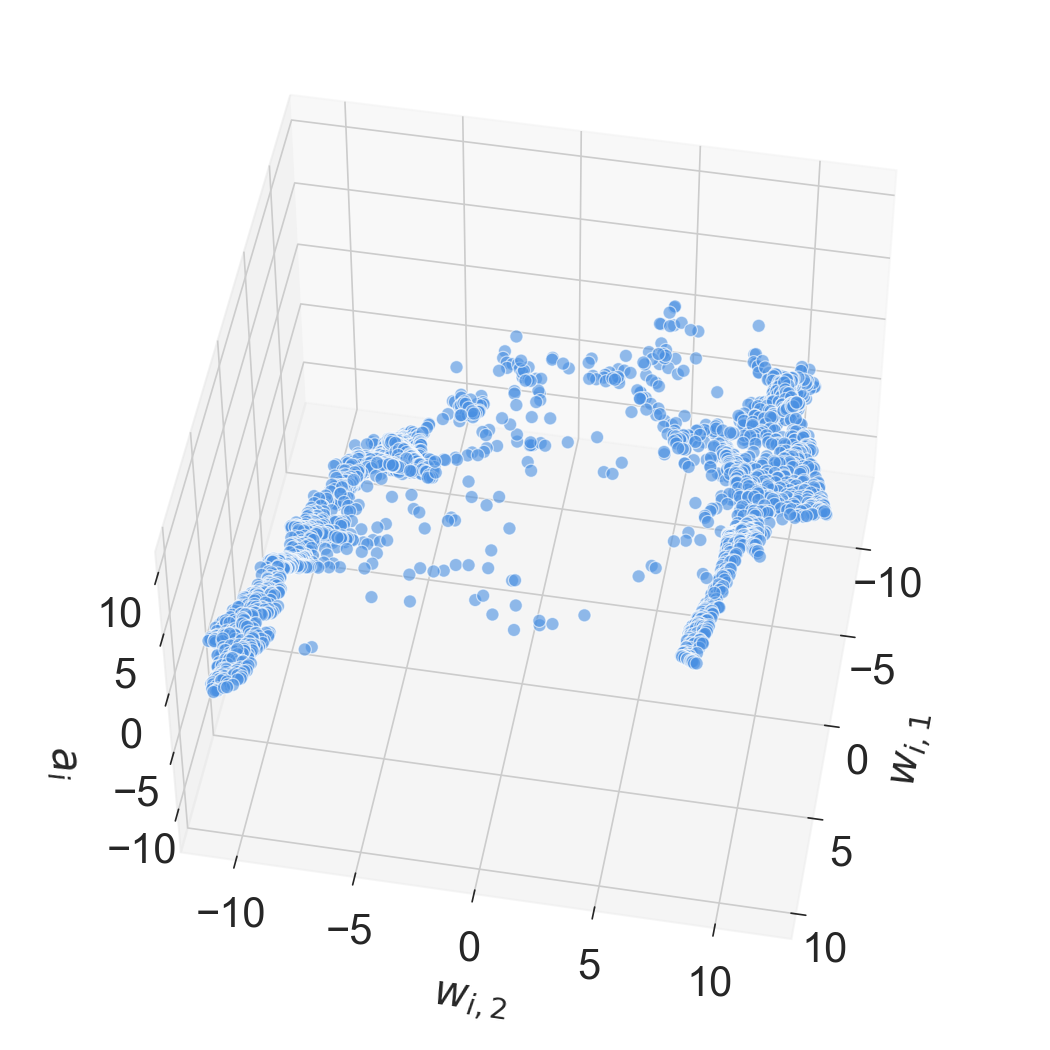}
    (c)
    \end{minipage}
    \caption{\textbf{Topology of a 3D neural network with momentum gradient descent.} 
The neurons are initialized on a genus-2 surface and optimized with momentum GD. The camera angle is manually adjusted to better visualize the structure of the point cloud.}
    \label{fig:exp-3d-momentum}
\end{figure}

\begin{figure}[ht]
    \centering
    \begin{minipage}{0.32\linewidth}
    \centering
    \includegraphics[width=\linewidth]{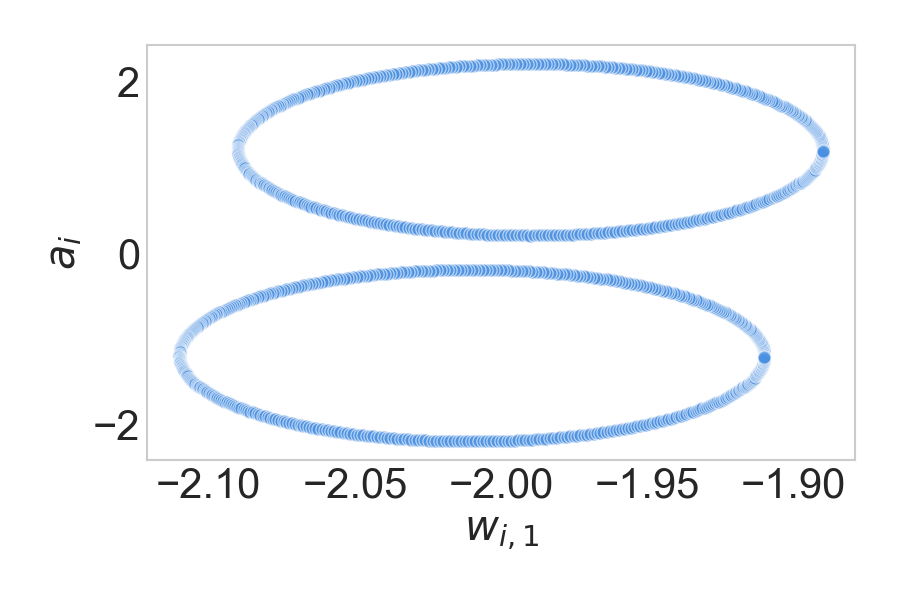}
    (a)
    \end{minipage}
    \hfill
    \begin{minipage}{0.32\linewidth}
    \centering
    \includegraphics[width=\linewidth]{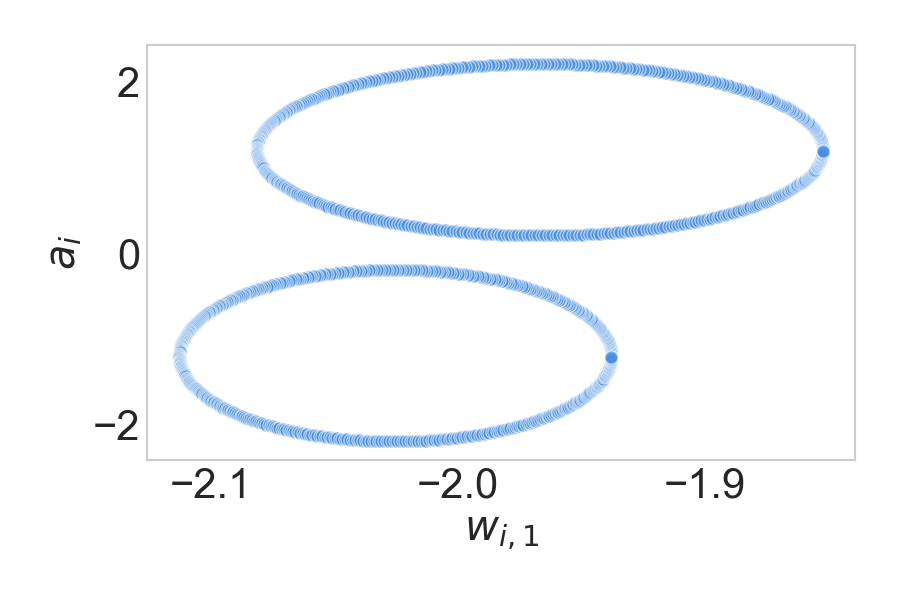}
    (b)
    \end{minipage}
    \hfill
    \begin{minipage}{0.32\linewidth}
    \centering
    \includegraphics[width=\linewidth]{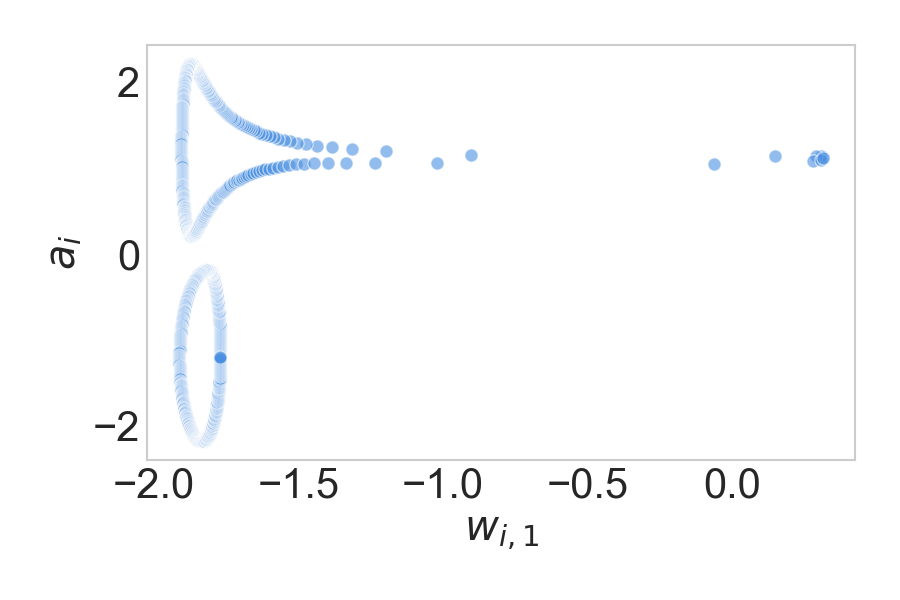}
    (c)
    \end{minipage}
    \caption{\textbf{Topology of a 2D neural network with Adam.} 
The neurons are initialized on the disjoint union of two genus-1 surfaces and optimized with Adam.}
    \label{fig:exp-2d-adam}
\end{figure}

\begin{figure}[ht]
    \centering
    \begin{minipage}{0.32\linewidth}
    \centering
    \includegraphics[width=\linewidth]{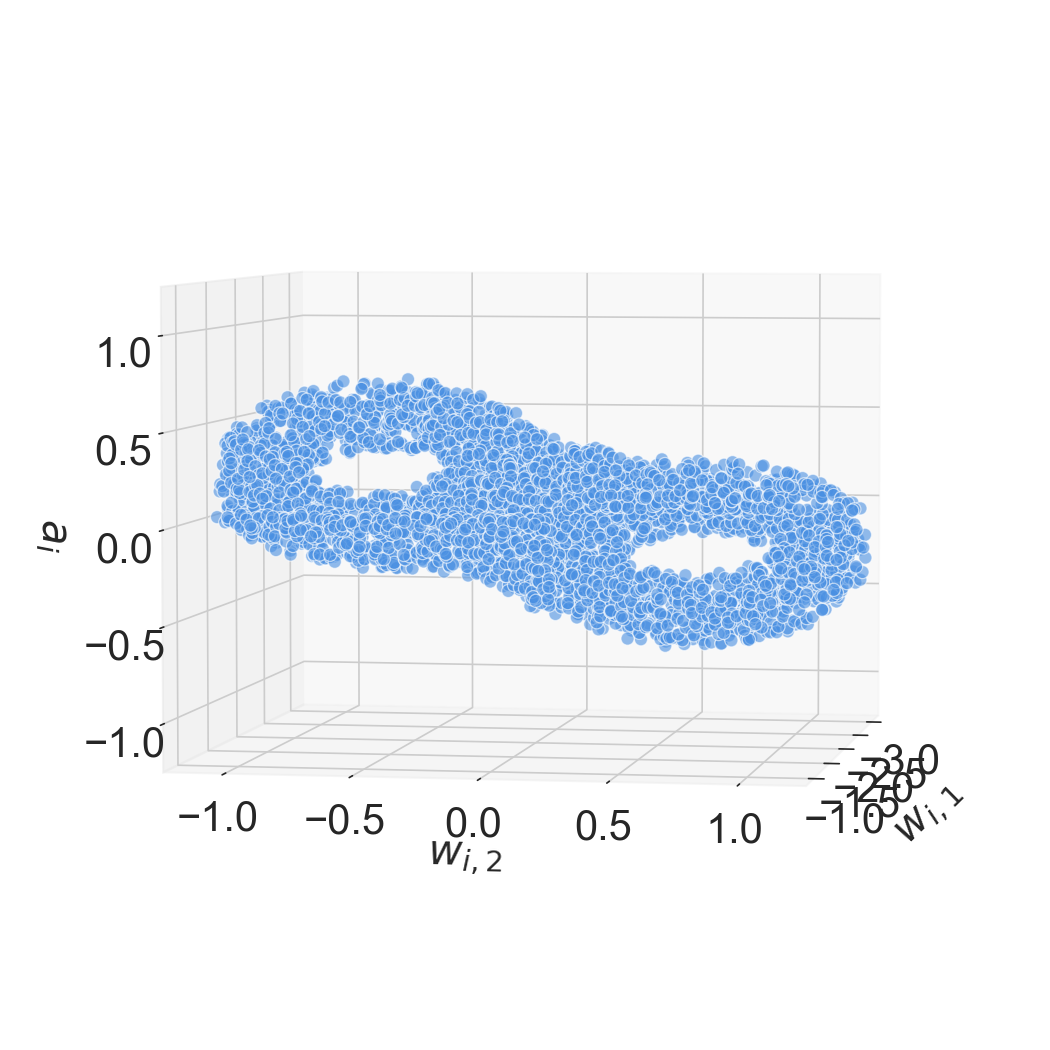}
    (a)
    \end{minipage}
    \hfill
    \begin{minipage}{0.32\linewidth}
    \centering
    \includegraphics[width=\linewidth]{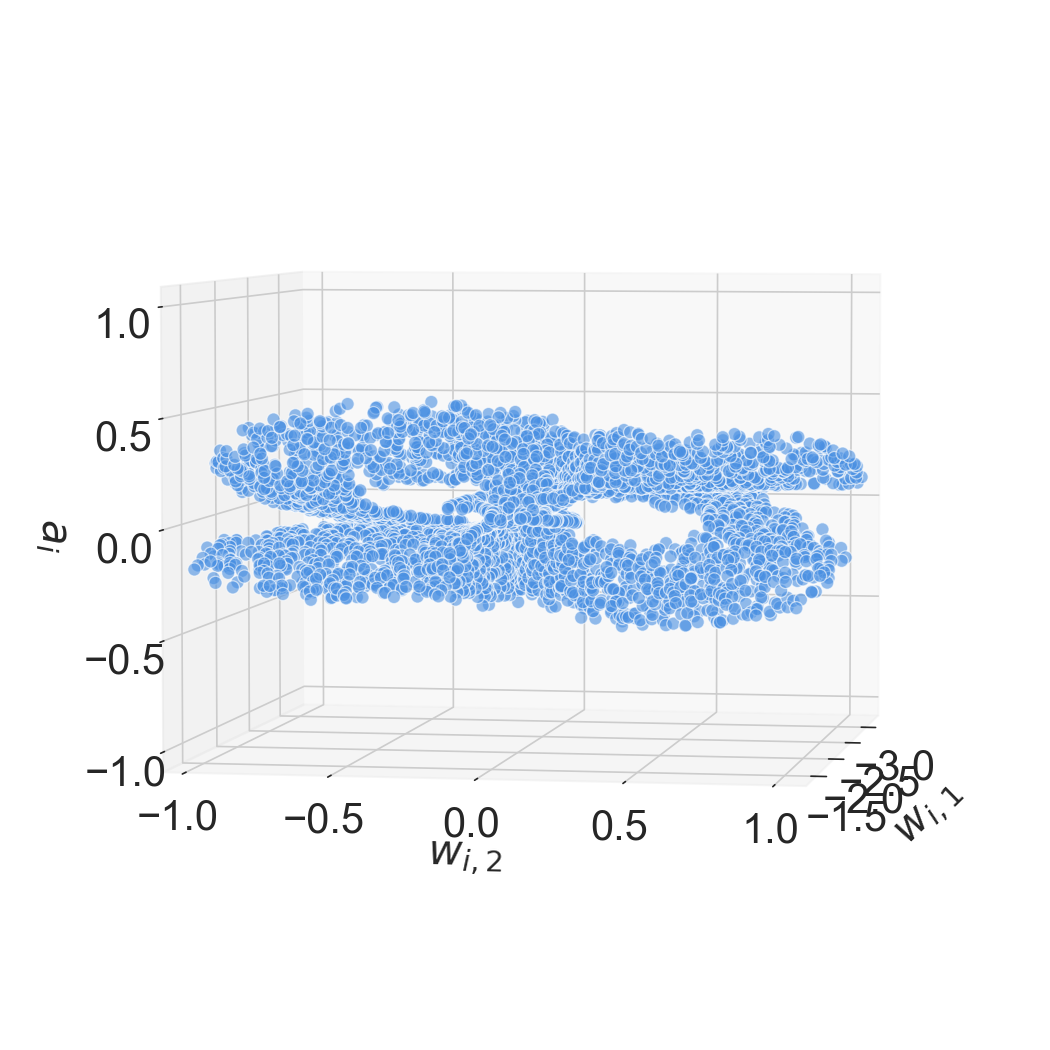}
    (b)
    \end{minipage}
    \hfill
    \begin{minipage}{0.32\linewidth}
    \centering
    \includegraphics[width=\linewidth]{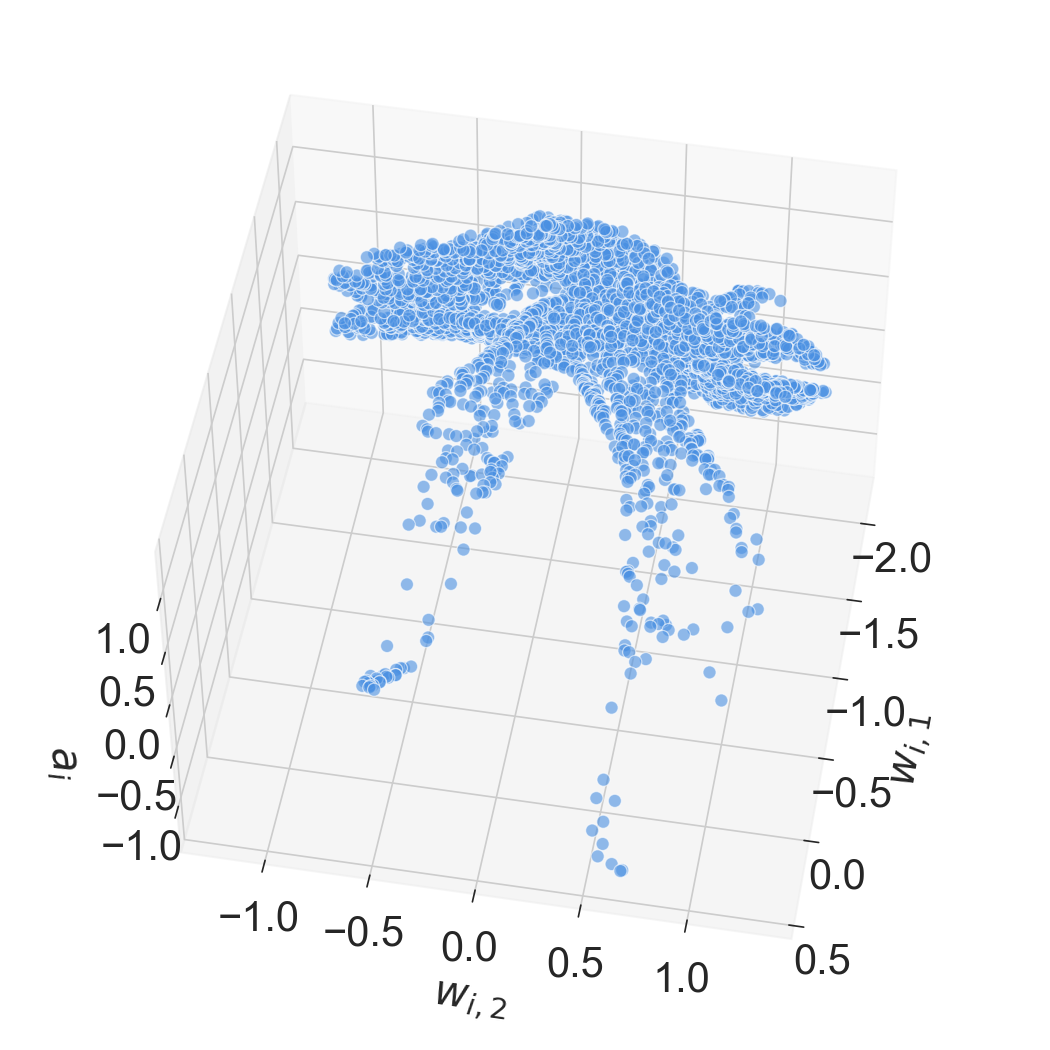}
    (c)
    \end{minipage}
    \caption{\textbf{Topology of a 3D neural network with Adam.} 
The neurons are initialized on a genus-2 surface and optimized with Adam. The camera angle is manually adjusted to better visualize the structure of the point cloud.}
    \label{fig:exp-3d-adam}
\end{figure}

\paragraph{Extra Results Complementing the Experiments on Real Tasks}\label{sec:extra-large-exp-results}

\begin{figure}[thbp]
    \centering\includegraphics[width=0.6\linewidth]{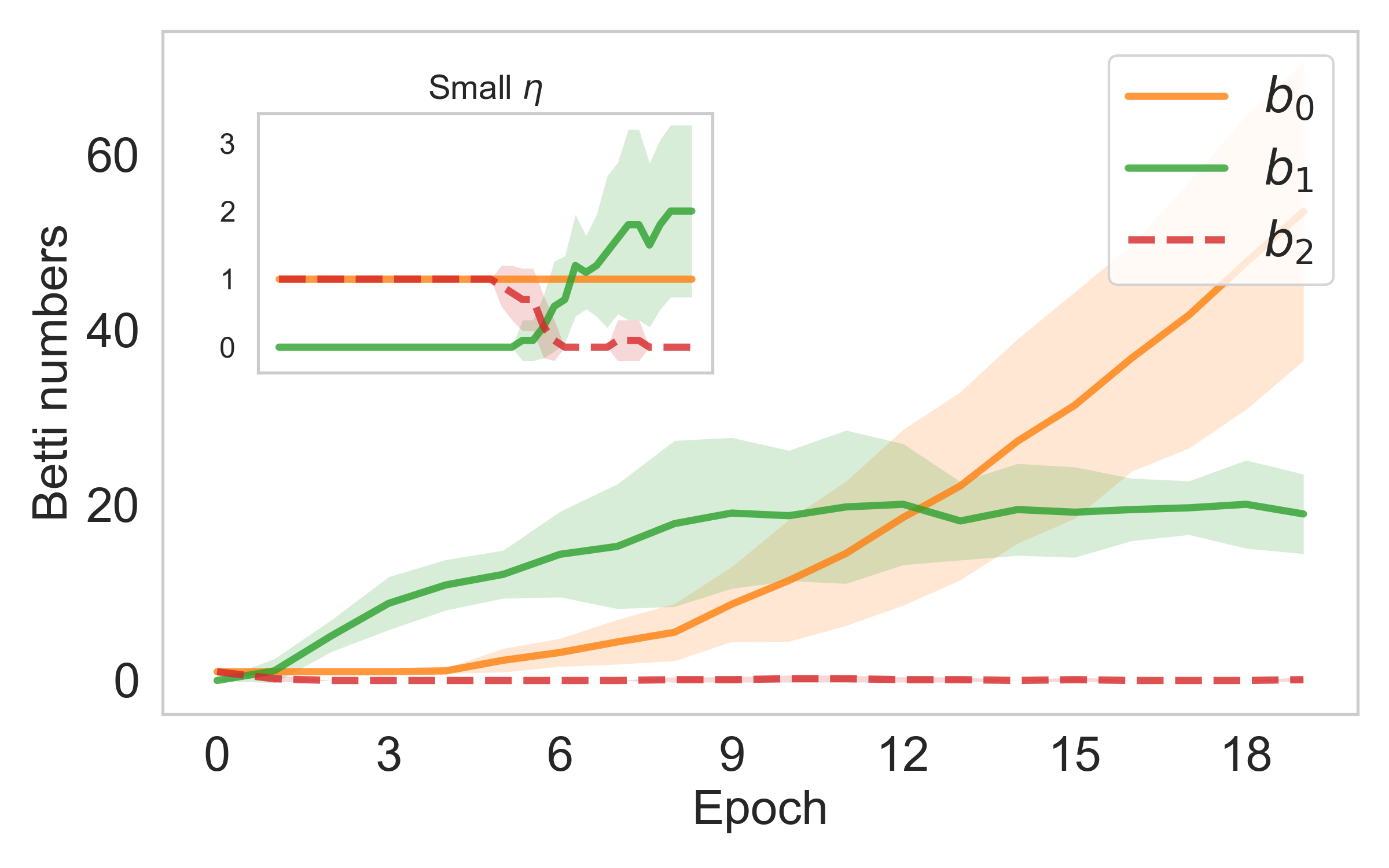}
    \caption{ \textbf{Evolution of Betti numbers during training with Adam.} 
    The plots show the first three Betti numbers $b_0$, $b_1$, and $b_2$ over time. The main panels correspond to large learning rates, while the insets show the results for small learning rates. Each curve is obtained by averaging over 10 runs with different random seeds; the curves denote the means and the shaded regions indicate the standard deviations. When the step size is small, the topology eventually changes after a certain training time. We attribute this to increasing sharpness: as training progresses, the network becomes sharper and the threshold for topological changes correspondingly decreases. }\label{fig:betti-adam}
\end{figure}

\begin{figure}[t!]
    \begin{minipage}{0.48\linewidth}
    \centering\includegraphics[width=\linewidth]{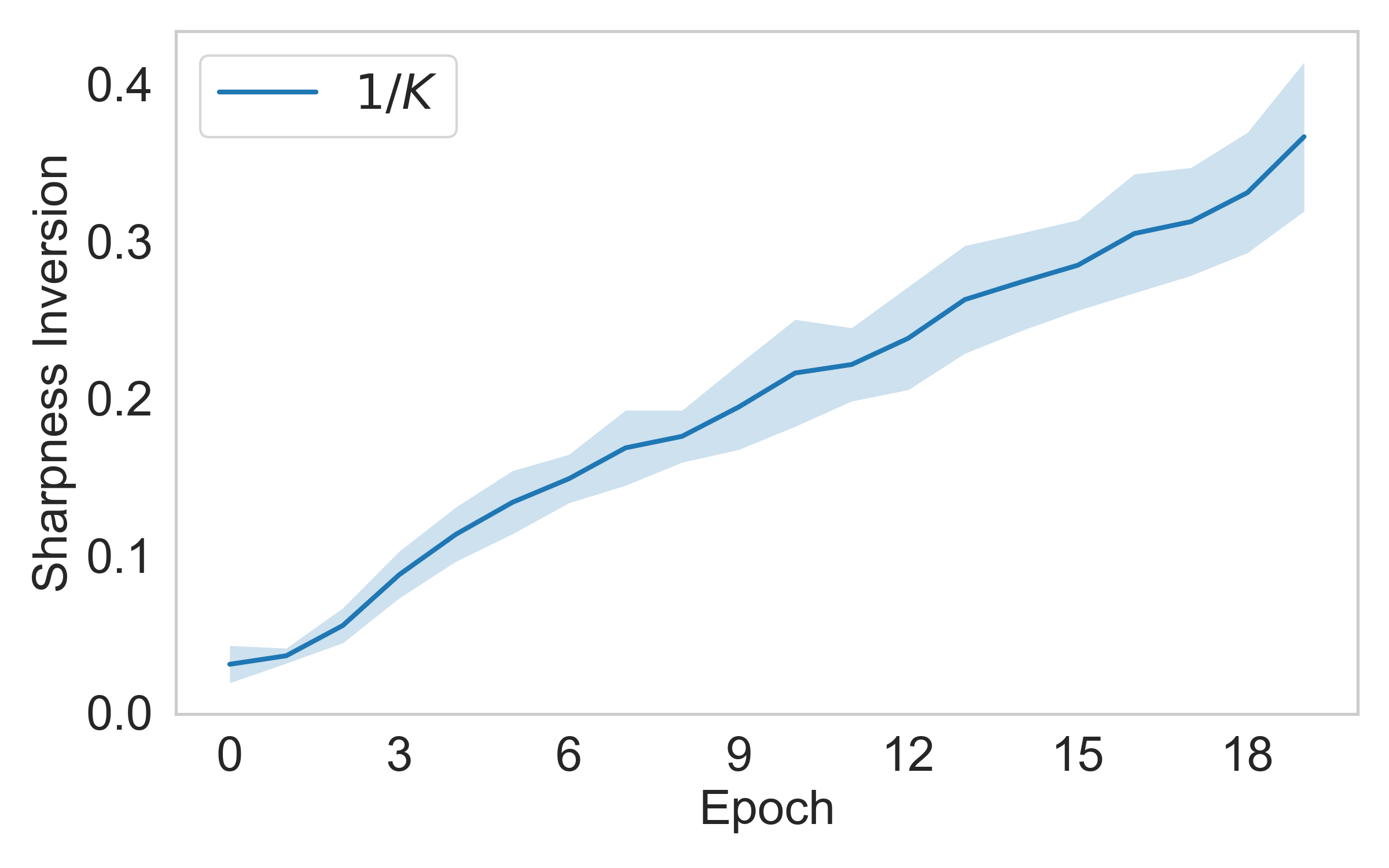}
    (a) Small step size
    \end{minipage}
    \hfill
    \begin{minipage}{0.48\linewidth}
    \centering\includegraphics[width=\linewidth]{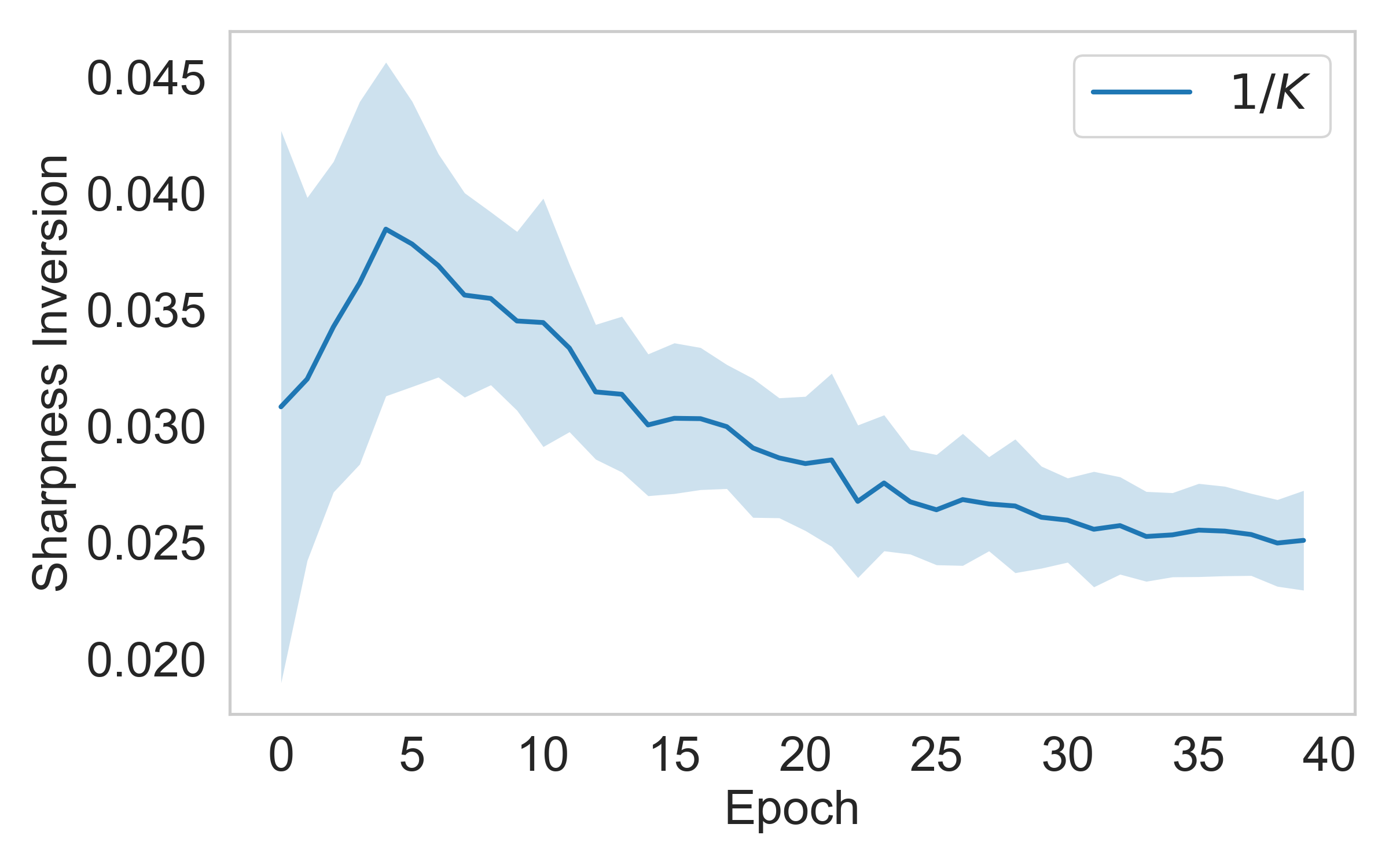}
    (b) Large step size
    \end{minipage}
    \caption{\textbf{Evolution of Betti numbers and sharpness inversion under Adam.} 
    Here $K$ denotes the largest eigenvalue of the Hessian matrix. 
    The small step-size setting is trained for a longer time to ensure convergence.}
    \label{fig:kinv-adam}
\end{figure}

The Betti number results of two-layer networks on MNIST are presented in \Cref{fig:betti-adam}. Notice that in the small step-size setting of \Cref{fig:betti-adam}, the topology remains unchanged initially but begins to change after a certain period of training. We attribute this to a key difference in the dynamics of Adam under small versus large learning rates. Specifically, with small learning rates, the network undergoes progressive sharpening: the sharpness steadily increases and eventually surpasses the topological critical point. Beyond this point, the step size becomes relatively ``large,'' and the topology of the neurons starts to change. In contrast, with large learning rates, the sharpness remains small. Similar phenomena have also been reported in the literature \citep{lr-phase-trainsition-5}.  

To verify this explanation, \Cref{fig:kinv-adam} shows the evolution of the sharpness inversion ($1/K$, where $K$ denotes the largest eigenvalue of the Hessian matrix\footnote{The sharpness is calculated with the hessian-eigenthings library \citep{hessian-eigenthings}.}) under Adam. Comparing these results with those in \Cref{fig:betti-adam}, it is evident that the topology begins to change once $1/K$ becomes sufficiently small, matching our theoretical prediction.  

\section{Experiment Details}\label{sec:experiment-details}

In this section, we provide the experimental details.

\subsection{Experiments with Low-dimensional Neural Networks}

As described in \Cref{sec:experiments}, we use a two-layer neural network with sigmoid activation, with input dimension $d = 1$ (referred to as the 2D case) or $d = 2$ (referred to as the 3D case).  

Given input dimension $d$, the input data are denoted by $\mathcal D = \left\{({\boldsymbol{z}}_s, y^*_s)\right\}_{s=1}^n \in \left(\mathbb R^d \times \mathbb R\right)^n$, where $n$ is the dataset size. Each input ${\boldsymbol{z}}_s$ is sampled from a Gaussian distribution with variance $4$, i.e., ${\boldsymbol{z}}_{s,j} \sim \mathcal N(0,4)$ for $j \in \{1,2\}$. The labels $y^*_s$ are generated by a teacher model
\begin{align}
y^*_s = \left< {\boldsymbol{a}}^*, \sigma\left({\boldsymbol{W}}^* {\boldsymbol{z}}_s\right)\right>,
\end{align}
where ${\boldsymbol{a}}^* \in \mathbb R^{h^*}$, ${\boldsymbol{W}}^* \in \mathbb R^{h^* \times d}$, and $h^*$ is the hidden size of the teacher model. Both ${\boldsymbol{a}}^*$ and ${\boldsymbol{W}}^*$ are randomly sampled at the beginning and fixed when constructing the dataset, with $a^*_j \sim \mathcal N(0,1)$ and $w^*_{j,k} \sim \mathcal N(0,0.36)$. In all experiments, $n$ is set to $5000$, with $70\%$ of the data used for training. The model is trained using mini-batches of size $128$. For GD with momentum, the momentum coefficient is set to $0.9$.  

Since different methods admit different thresholds for effective learning rates, we manually tuned the step sizes for each optimizer. The learning rates used to generate the reported results are summarized in \Cref{tab:exp-details-learning-rates}. In all cases, we train the model until the training loss converges.\footnote{Although in some cases the small and large step sizes appear close, we observed that low-dimensional networks are highly sensitive to the learning rate when trained with GD, possibly due to a degenerated loss landscape. For instance, in the 2D case, if $\eta = 2\times 10^{-3}$ the neurons remain nearly unchanged, whereas for $\eta = 4\times 10^{-3}$ the loss diverges. Thus we must compare within a relatively narrow range of learning rates.}  

\begin{table}[htbp]
    \centering
    \begin{tabular}{cccc}
        \toprule
        Optimizer & Network dimension & Small learning rate & Large learning rate\\
        \midrule
         GD & 2D & $2.5\times 10^{-3}$ & $3 \times 10^3$\\ 
         GD & 3D & $8\times 10^{-4}$ & $9\times 10^{-4}$\\ 
         Adam & 2D & $10^{-4}$ & $10^{-2}$ \\ 
         Adam & 3D & $3 \times 10^{-2}$ & $10^{-1}$ \\ 
         Momentum & 2D & $5 \times 10^{-4}$ & $4.5 \times 10^{-3}$\\ 
         Momentum & 3D & $10^{-3}$ &  $1.25 \times 10^{-3}$\\ 
         \bottomrule
    \end{tabular}
    \caption{Learning rates used in the experiments for low-dimensional neural networks.}
    \label{tab:exp-details-learning-rates}
\end{table}

\subsection{Experiments with Large Neural Networks}

In the experiments on MNIST (\Cref{sec:experiments}), we use a two-layer MLP with sigmoid activation and hidden size $1024$. The model is trained for classification using cross-entropy loss, without any additional regularization (e.g., weight decay). The batch size is set to $1024$. For GD, the small and large learning rates are $0.02$ and $0.5$, respectively. For Adam, the corresponding values are $10^{-5}$ and $10^{-3}$.  

The Betti numbers are calculated with the GUDHI library \citep{gudhi:RipsComplex}. When computing Betti numbers for the neuron-induced point cloud, a minimal distance (i.e. scale) must be chosen to decide how close two points need to be for them to be considered as neighbors. To ensure robustness to scale changes during training, we adopt a self-adaptive strategy for deciding the minimal distance: the minimal distance is set to $1/4$ times the diameter of the point cloud.  

\section{A Weaker Version of Continuity Property}\label{sec:weaker-continuous}

In \Cref{sec:preliminaries}, we mentioned that the specific form of $K$-continuity property is only for establishing a correspondence with the smoothness property used in optimization theory, and in our theory this property can actually be weaker. Specifically, when $I$ is an infinite set, the $K$-continuity property defined in \Cref{sec:preliminaries} implicitly requires that $U(\mathcal X)$ and $U(\mathcal Y)$ only differ in finite number of entries, which might not always hold. This condition can be relaxed to the upper bound on each entry of $U(\mathcal X) - U(\mathcal Y)$. 

\textbf{(P2$'$) Altered $K$-Continuity Property}: If there exists a constant $K > 0$, such that for any $t \in \mathbb N$, $\mathcal X,\mathcal Y \in \left(\mathbb R^D\right)^I$ and $i \in I$, we have \begin{align}
\sup_{i \in I} \left\|U_i^{(t)}(\mathcal X) - U ^{(t)}_i(\mathcal Y)\right\| \leq \frac{K}{2} \left\|\mathcal X - \mathcal Y\right\|,
\end{align}
then we say $U^{(t)}$ has altered $K$-continuity property. 

The proof of the main theories is built upon \Cref{lem:no-crossing}. Here we prove a variation of \Cref{lem:no-crossing} with the altered $K$-continuity property.

\begin{lemma}[Altered - No Splitting]\label{lem:altered-no-crossing}The following statement holds when $U$ has the equivariance property and altered $K$-continuity property. For any $t\in \mathbb N$ and $i,j \in I$ such that $i \neq j$, we have \begin{align}
\left\|  U_i(\mathcal X^{(t)}) -  U_j(\mathcal X^{(t)})\right\| \leq K \left\| {\boldsymbol{x}}_i^{(t)} - {\boldsymbol{x}}_j^{(t)}  \right\|.
\end{align}
\end{lemma}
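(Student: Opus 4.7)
The plan is to mirror the proof of Lemma~\ref{lem:no-crossing} but to use the entrywise (supremum) form of continuity in P2$'$ in place of the collection-level bound of P2. First I would fix $t$, abbreviate $\mathcal{X} = \mathcal{X}^{(t)}$, and let $P \in \mathsf{FSym}(I)$ be the transposition that swaps $i$ and $j$ (as in~\eqref{eq:a-switching-P}). Exactly as in the original argument, equivariance (P1) at the $i$-th coordinate gives $U_j(\mathcal{X}) = (PU(\mathcal{X}))_i = U_i(P\mathcal{X})$, reducing the two-neuron output difference to a single-entry perturbation difference:
\begin{align}
U_i(\mathcal{X}) - U_j(\mathcal{X}) = U_i(\mathcal{X}) - U_i(P\mathcal{X}).
\end{align}

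Next I would apply P2$'$ directly to the right-hand side. Because $\mathcal{X}$ and $P\mathcal{X}$ differ only at the two indices $i$ and $j$, a routine computation gives $\|\mathcal{X} - P\mathcal{X}\| = \sqrt{2}\,\|{\boldsymbol{x}}_i^{(t)} - {\boldsymbol{x}}_j^{(t)}\|$. Bounding the left-hand side entry by the supremum in P2$'$ and substituting this value then yields
\begin{align}
\|U_i(\mathcal{X}) - U_i(P\mathcal{X})\| \;\le\; \sup_{k\in I}\|U_k(\mathcal{X}) - U_k(P\mathcal{X})\| \;\le\; \frac{K}{2}\,\|\mathcal{X} - P\mathcal{X}\| \;=\; \frac{K}{\sqrt{2}}\,\|{\boldsymbol{x}}_i^{(t)} - {\boldsymbol{x}}_j^{(t)}\|,
\end{align}
from which the claim follows, with room to spare, since $1/\sqrt{2}\le 1$.

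There is no serious obstacle here: the factor of $1/2$ placed in front of $\|\mathcal{X}-\mathcal{Y}\|$ in P2$'$ is exactly what absorbs the $\sqrt{2}$ coming from the two-entry perturbation, so the conclusion matches Lemma~\ref{lem:no-crossing} verbatim. Consequently every downstream result in the appendix -- Lemmas~\ref{lem:no-merging} and~\ref{lem:learning-rule-bijection}, and Theorems~\ref{thm:main} and~\ref{thm:measure-preserving} -- carries over unchanged under P2$'$, since all of them invoke P2 only through Lemma~\ref{lem:no-crossing}. The only genuinely new feature of P2$'$, namely that it does \emph{not} implicitly require $\mathcal{X}-\mathcal{Y}$ to have finite support, is not exploited in this lemma itself (the pair $(\mathcal{X},P\mathcal{X})$ trivially differs in only two entries); its real benefit manifests later when one wishes to apply the framework with an infinite index set $I$ where generic $\mathcal{X},\mathcal{Y}$ need not have finite-support difference.
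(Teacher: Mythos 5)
Your proof is correct and follows exactly the same route as the paper: swap $i$ and $j$ via a transposition $P$, use equivariance to rewrite $U_j(\mathcal X)$ as $U_i(P\mathcal X)$, and then apply P2$'$ to the single-entry difference together with $\|\mathcal X - P\mathcal X\| = \sqrt 2\,\|{\boldsymbol{x}}_i^{(t)} - {\boldsymbol{x}}_j^{(t)}\|$. The only thing you add is the (correct) observation that the argument actually gives the sharper constant $K/\sqrt 2$, and the (also correct) remark about which downstream results only touch P2 through this lemma; both are consistent with the paper.
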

\begin{proof}
    Let $\mathcal X = \mathcal X^{(t)}$ for convenience. Define $P \in \mathsf{FSym}(I)$ as switching $i$ and $j$ (as in \eqref{eq:a-switching-P}). Then using the equivariance property, we have \begin{align}
      U_i(\mathcal X) -  U_j(\mathcal X) & = U_i(\mathcal X) -  \left( PU(\mathcal X) \right)_i
      = U_i(\mathcal X) -  U_i(P \mathcal X).
    \end{align}
    Notice that $\mathcal X$ and $P \mathcal X$ only differ in entries $i$ and $j$. Using the $K$-continuity property, we have \begin{align}
    \left\|U_i(\mathcal X) -  U_j(\mathcal X)\right\| 
  & = \left\| U_i(\mathcal X) -  U_i(P \mathcal X) \right\|
    \\ & \leq \frac{K}{2} \left\|\mathcal X- P\mathcal X\right\|
    \\ & \leq  K\left\| {\boldsymbol{x}}_i^{(t)} - {\boldsymbol{x}}_j^{(t)} \right\|.
    \end{align}
\end{proof}

In all the presented theories, the $K$-continuity property can be replaced by the altered $K$-continuity property. The proofs directly apply by replacing \Cref{lem:no-crossing} with \Cref{lem:altered-no-crossing}.

\end{document}